\newcommand*{\citet}[1]{\AtNextCite{\AtEachCitekey{\defcounter{maxnames}{2}}} \textcite{#1}}
\newcommand*{\citep}[1]{\cite{#1}}
\let\citealp\citep
	\let\Cref\crtCref
	\let\cref\crtcref
\newcolumntype{C}{>{\centering\arraybackslash}X}
\newcolumntype{P}[1]{>{\arraybackslash}p{#1}}
\renewcommand{\arraystretch}{2}
\newcolumntype{x}[1]{%
	>{\raggedleft\hspace{0pt}}p{#1}}%
\newcommand{\hac}[1]{
		\textcolor{blue}{\textbf{HA:} {#1}}
}
\newcommand{\tk}[1]{
		\textcolor{magenta}{\textbf{TK:} {#1}}
}
\newcommand{\kt}[1]{
		\textcolor{red}{\textbf{KT:} {#1}}
}
\renewcommand{\hac}[1]{}
\renewcommand{\tk}[1]{}
\renewcommand{\kt}[1]{}
\title{Private Online Prediction from Experts:\\ Separations and Faster Rates}
\author{%
    Hilal Asi\thanks{Apple; \texttt{hilal.asi94@gmail.com} } 
    \and Vitaly Feldman\thanks{Apple; \texttt{vitaly.edu@gmail.com}.}
    \and Tomer Koren\thanks{Blavatnik School of Computer Science, Tel Aviv University; \texttt{tkoren@tauex.tau.ac.il}.}
    \and Kunal Talwar\thanks{Apple; \texttt{kunal@kunaltalwar.org}.}
    }
\begin{document}

\maketitle

\begin{abstract}
    Online prediction from experts is a fundamental problem in machine learning and several works have studied this problem under privacy constraints. We propose and analyze new algorithms for this problem that improve over the regret bounds of the best existing algorithms for non-adaptive adversaries. For approximate differential privacy, our algorithms achieve regret bounds of $\wt O(\sqrt{T \log d} + \log d/\eps)$ for the stochastic setting and $\wt O(\sqrt{T \log d} + T^{1/3} \log d/\eps)$ for oblivious adversaries (where $d$ is the number of experts). For pure DP, our algorithms are the first to obtain sub-linear regret for oblivious adversaries in the high-dimensional regime $d \ge T$. Moreover, we prove new lower bounds for adaptive adversaries. Our results imply that unlike the non-private setting, there is a strong separation between the optimal regret for adaptive and non-adaptive adversaries for this problem. Our lower bounds also show a separation between pure and approximate differential privacy for adaptive adversaries where the latter is necessary to achieve the non-private $O(\sqrt{T})$ regret. 
    
\end{abstract}

\section{Introduction}
\label{sec:intro}

We study the problem of differentially private online prediction from experts (DP-OPE), where the algorithm interacts with an adversary for $T$ rounds. In each round, the algorithm picks an expert $x_t \in [d]$ and the adversary chooses a loss function $\ell_t: [d] \to [0,1]$. The algorithm incurs loss $\ell_t(x_t)$ at round $t$, and the goal is to design algorithms that minimize the regret, that is, the cumulative loss compared to the best fixed expert in hindsight, defined as
\begin{equation*}
    Reg_T = \sum_{t=1}^T \ell_t(x_t) -  \min_{x^\star \in [d]} \sum_{t=1}^T \ell_t(x^\star).
\end{equation*}

Online prediction from experts is an important problem in machine learning with numerous applications~\citep{AroraHaKa12}.
Without any privacy restrictions, the power of the adversary (oblivious adversary that picks the losses in advance versus adaptive adversary that picks the losses online in response to the algorithm) does not change the optimal rates for this problem~\citep{cesa2006prediction}. This has perhaps led prior work in private online experts to focus on the strongest notion of adaptive adversaries~\citep{JainKoTh12,SmithTh13,JainTh14,AgarwalSi17}. In this work, we study the problem of DP-OPE against oblivious adversaries as well and show that, somewhat surprisingly, the power of the adversary can significantly affect the optimal rates for this problem.

We consider three types of adversaries: the strongest, \emph{adaptive adversary}, can choose the loss sequence $(\ell_t)_{t=1}^T$ adversarially in an online manner, where the loss $\ell_t$ may depend arbitrarily on the choices made by the algorithm in previous time steps $(x_\ell)_{\ell=1}^{t-1}$. A slightly weaker notion is that of an \emph{oblivious adversary}, which chooses a sequence of loss functions in advance. The weakest adversary is the \emph{stochastic adversary} which chooses a distribution over loss functions and at each round samples a loss function i.i.d.~from this distribution.

In the classical non-private setting, all of these adversarial models are equivalent, in the sense that they all induce an optimal rate of order $\sqrt{T \log d}$~\citep{cesa2006prediction}. We study algorithms that are required to be differentially private, where we view the sequence of loss functions as the dataset and adjacent datasets differ in a single loss function. We note that all our algorithms are private with respect to adaptive adversaries (see~\cref{sec:pre} for precise definitions) and only the utility bounds assume a weaker adversary. 

Under the constraint of \ed-differential privacy, the best existing results obtain regret of order
$$
\sqrt{T \log d} + \min\left( \frac{1}{\diffp} \sqrt{T \log(1/\delta) \log d}, \frac{1}{\diffp} \sqrt{d \log(1/\delta)} \log d \log^2 T \right) 
$$ 
for adaptive adversaries~\citep{JainTh14,AgarwalSi17}. For pure $\diffp$-differential privacy, the best existing regret~\citep{JainTh14} is of order 
$$\sqrt{T \log d} +  \frac{d  \log d \log^2 T}{\diffp}.$$ 
However, none of existing results prove any lower bounds (besides the trivial non-private lower bound $\sqrt{T \log d}$) to certify the optimality of these rates; thus, it is currently unclear whether these rates are optimal for adaptive adversaries, let alone for oblivious and stochastic adversaries.



\subsection{Our contributions}
We study DP-OPE for different types of adversaries and develop new algorithms and lower bounds. More precisely, we obtain the following results.

\paragraph{Faster rates for oblivious adversaries (\cref{sec:ub-obl-sd}):} 
We develop new algorithms for DP-OPE with oblivious adversaries based on a lazy version of the multiplicative weights algorithm. For pure $\diffp$-DP, our algorithms obtain regret $$\frac{\sqrt{T} \log d}{\diffp}.$$ 
This is the first algorithm to achieve sub-linear regret for pure DP in the high-dimensional regime $d \ge T$. For approximate \ed-DP, our algorithm achieves regret $$ \sqrt{T \log d } + \frac{T^{1/3} \log^{1/3}(1/\delta) \log d}{\diffp}.$$ 
This essentially demonstrates that the privacy cost for DP-OPE is negligible as long as $\diffp \gg T^{-1/6}$. In contrast, previous work has established privacy cost $\diffp^{-1} \sqrt{T \log d\log(1/\delta)}$ which is larger than the non-private cost even when $\diffp$ is constant and $\delta=1/T$. See~\cref{fig:table-appr,fig:table-pure} for more details.

\paragraph{Separation between adaptive and oblivious adversaries (\cref{sec:lower-bounds}):} 
We prove the first lower bounds for DP-OPE with adaptive adversaries that are stronger than the non-private lower bounds. These bounds show that any private algorithm must suffer linear regret if $\diffp \le 1/\sqrt{T}$ for approximate DP and $\diffp \le 1/10$ for pure DP. As our algorithms for oblivious adversaries obtain sub-linear regret in this regime of privacy parameters, this demonstrates that the oblivious model is significantly weaker than the adaptive model in the private setting (see~\cref{fig:comp}). Moreover, these lower bounds show a separation between pure and approximate DP for DP-OPE with adaptive adversaries as the latter is necessary to obtain sub-linear regret.

\paragraph{Near-optimal rates for stochastic adversaries (\cref{sec:ub-stoch}):}  
We design a general reduction that transforms any algorithm for private stochastic optimization in the offline setting into an algorithm for private online optimization with nearly the same rates (up to logarithmic factors). By building on algorithms for the offline setting~\citep{AsiFeKoTa21}, we obtain regret $O(\sqrt{T\log d} +  \diffp^{-1}\log d \log T)$ for DP-OPE with stochastic adversaries. Moreover, using this reduction with general algorithms for differentially private stochastic convex optimization (DP-SCO)~\citep{FeldmanKoTa20}, we obtain near-optimal regret $O( \sqrt{T} + \diffp^{-1} \sqrt{d} \log T )$ for the problem of DP-OCO (online convex optimization) with stochastic adversaries, improving over the best existing result $\sqrt{T}d^{1/4}/\diffp $~\citep{KairouzMcSoShThXu21}.


\paragraph{Improved rates for DP-OCO (\cref{sec:oco-imp}):}
Building on our improvements for DP-OPE, we improve the existing rates for DP-OCO where the algorithms picks $x_t \in \mc{X} = \{ x \in \R^d: \ltwo{x} \le 1\}$ and the adversary picks $\ell_t : \mc{X} \to \R$. Our rates improve over the rates of~\cite{KairouzMcSoShThXu21} in certain regimes.


    
    
    
    

\renewcommand{\arraystretch}{2}

\begin{table}[h]
\begin{center}
    \begin{subtable}[h]{0.8\textwidth}
		\begin{tabular}{| c | c | c |}
		    \hline
			  & \textbf{\darkblue{Prior work}} & \textbf{\darkblue{This work}}\\
			\hline
			{{\textbf{Stochastic}}} & \multirow{3}{*}{{$ \sqrt{T \log d} + \min\left( \frac{\sqrt{T  \log d}}{\diffp}, \frac{\sqrt{d } \log  d}{\diffp}    \right)$}  } & $   \sqrt{T\log d} +  \frac{\log d }{\diffp} $      \\
			\cline{1-1} \cline{3-3}
			\textbf{{Oblivious}} & & $\sqrt{T \log d } + \frac{T^{1/3}  \log d}{\diffp}$ \\ 
			\cline{1-1} \cline{3-3}
			{\textbf{Adaptive}} & & None \\ 
			\hline
		\end{tabular}
    \caption{Approximate \ed-DP.}
    \label{fig:table-appr}
\end{subtable}

\vspace{.5cm}
\begin{subtable}[h]{0.8\textwidth}
        \hspace{2cm}
		\begin{tabular}{| c | c | c |}
		    \hline
			  & \textbf{\darkblue{Prior work}} & \textbf{\darkblue{This work}}\\
			\hline
			{\textbf{Stochastic}} & \multirow{3}{*}{{$ \sqrt{T \log d} +  \frac{d \log d}{\diffp}$}  } & $   \sqrt{T\log d} +  \frac{\log d }{\diffp} $      \\
			\cline{1-1} \cline{3-3}
			{\textbf{Oblivious}} & & $\frac{\sqrt{T} \log d}{\diffp}$ \\ 
			\cline{1-1} \cline{3-3}
			{\textbf{Adaptive}} & & None \\ 
			\hline
		\end{tabular}
    \caption{Pure $\diffp$-DP.}
    \label{fig:table-pure}
\end{subtable}
     \caption{Upper bounds for DP-OPE with different types of adversaries. For readability, we omit logarithmic factors that depend on $T$ and $1/\delta$.}
     \label{tab:temps}
     \end{center}
\end{table}

\begin{figure}[htb] 
  \begin{subfigure}{0.5\textwidth}
    \centering
    \begin{tikzpicture}
\begin{axis}[
width=7.0cm,               
grid=major,
xmax=1,xmin=0,
ymin=0.4,ymax=1.1,
xlabel={$\alpha$},ylabel={$\beta$},
legend style={at={(0.7,0.46)},anchor=north},
tick label style={font=\small}
]

\addplot+[color=red,domain=0:1,thick,mark=none] {min(max(1/3 + x,1/2),1)};
\addlegendentry{\tiny DP-SD (Cor.~\ref{cor:sd-appr})}

\addplot+[color=blue,domain=1/3:1,thick,mark=none] {min(2/5 + 4*x/5,1)};
\addlegendentry{\tiny Batch DP-SD (Cor.~\ref{cor:sd-batch})}

\addplot+[color=black,domain=0:1,thick,mark=none] {min(max(1/2 + x,1/2),1)};
\addlegendentry{\tiny Prior work}

\addplot+[color=black,domain=0:1/2,thick,dashed,mark=none] {max(1/2,2*x)};
\addlegendentry{\tiny LB (adaptive adversary)}



\end{axis}
\end{tikzpicture}
    \caption{Approximate \ed-DP}
  \end{subfigure}%
  \begin{subfigure}{0.5\textwidth}
    \centering
    \begin{tikzpicture}
\begin{axis}[
width=7.0cm,               
grid=major,
xmax=1,xmin=0,
ymin=0.4,ymax=1.1,
xlabel={$\alpha$},
legend style={at={(0.7,0.4)},anchor=north},
tick label style={font=\small}
]

\addplot+[color=red,domain=0:1,thick,mark=none] {min(max(1/2 + x,1/2),1)};
\addlegendentry{\tiny DP-SD (Cor.~\ref{cor:sd-appr})}


\addplot+[color=black,domain=0:1,thick,mark=none] {1};
\addlegendentry{\tiny Prior work}

\addplot+[color=black,domain=0:1/2,thick,dashed,mark=none] {1};
\addlegendentry{\tiny LB (adaptive adversary)}



\end{axis}
\end{tikzpicture}
    \caption{Pure $\diffp$-DP}
  \end{subfigure}
  \caption{Regret bounds for private prediction from experts against oblivious adversaries for the high-dimensional regime $d \ge T$. 
We denote the privacy parameter $\diffp = T^{-\alpha}$ and regret $T^\beta$, and plot $\beta$ as a function of $\alpha$ (ignoring logarithmic factors).  }\label{fig:comp}
\end{figure}
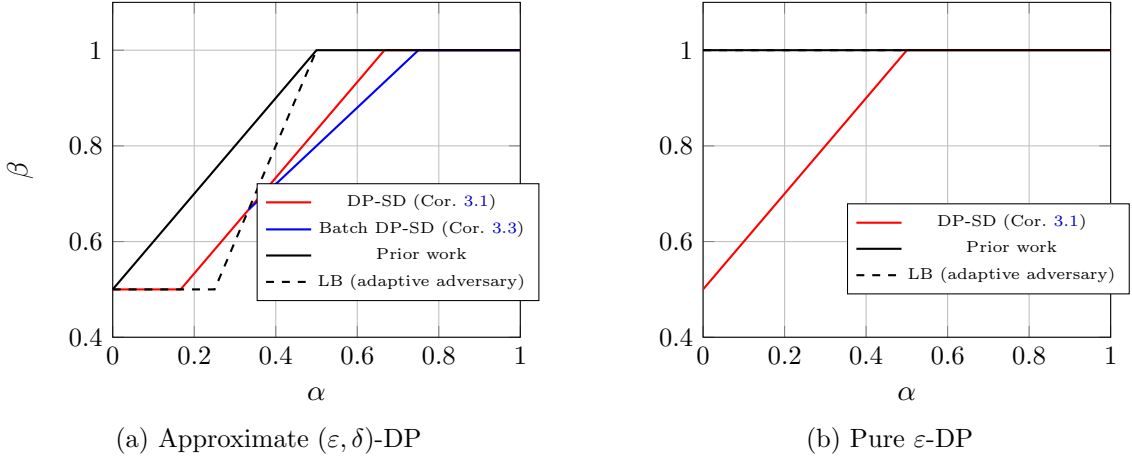

\subsection{Related work}
\citet{DworkNaPiRo10} were the first to study differentially private learning in the online setting and introduced the binary tree mechanism which is an important building block of many private algorithms in the online setting. In our context of online prediction from experts, there has been several works that study this problem under the constraint of differential privacy~\citep{JainKoTh12,SmithTh13,JainTh14,AgarwalSi17}. The best existing algorithms depend on the dimensionality regime: in the high-dimensional setting, \citet{JainTh14} developed an algorithm based on follow-the-regularized-leader (FTRL) with entropy regularization that achieves regret $O(\diffp^{-1}\sqrt{T \log d \log(1/\delta)})$ for \ed-DP. For low dimensional problems, \citet{AgarwalSi17} developed an improved algorithm that uses the binary tree mechanism to estimate the running sum of the gradients in the FTRL framework. Their algorithm achieves regret $O(\sqrt{T\log d} + \diffp^{-1} d \log d \log^2 T)$  for $\diffp$-DP. Moreover, extending their algorithm to \ed-DP results in regret  $O(\sqrt{T\log d} + \diffp^{-1} \sqrt{d \log(1/\delta)} \log d \log^2 T)$. More recently, \citet{AsiFeKoTa22b} considered DP-OPE in the realizable regime where there is a zero-loss expert, and propose new algorithms that obtain near-optimal regret $\mathsf(poly)(\log d)/\diffp$ up to logarithmic factors.

A slightly related and somewhat easier problem is that of differentially private stochastic convex optimization (DP-SCO) which has been extensively investigated recently~\citep{BassilySmTh14,BassilyFeTaTh19,FeldmanKoTa20,AsiFeKoTa21,AsiDuFaJaTa21}. In DP-SCO, we are given $n$ samples from some distribution and we wish to minimize the excess population loss. In contrast to the online setting, here all $n$ samples are given to the algorithm and it is required to produce only one model; this makes the online version harder as the algorithm has to output a model at each time-step. However, we note that our reduction in~\cref{sec:ub-stoch} shows that DP-SCO is essentially as hard as DP-OCO with stochastic adversaries (up to logarithmic factors). For oblivious and adaptive adversaries, the online setting may be harder as it allows general loss functions that are not necessarily generated from a distribution.

Perhaps most closely related to our problem in the non-private setting is online learning with limited switching~\citep{KalaiVe05,GeulenVoWi10,AltschulerTa18,chen2020minimax,ShermanKo21}. In this setting, the algorithm aims to minimize the regret when having a budget of at most $S$ switches, that is, it can update its decision at most $S$ times. This constraint is (informally) somewhat related to privacy as the less you update the model, the less information is leaked about the data. The ideas developed in this literature turn out to be useful for our DP-OPE setting. Indeed, our algorithms in~\cref{sec:ub-obl-sd} build on ideas from~\cite{GeulenVoWi10} which developed a lazy version of multiplicative weights to limit the number of switches. Moreover,
similarly to our results, the hardness of online learning problems with limited switching can depend on the power of the adversary. For example, for OCO with limited switching, the optimal rate is $\sqrt{T} + T/S$ for oblivious adversaries and $\sqrt{T} + T/\sqrt{S}$ for adaptive adversaries~\citep{chen2020minimax,ShermanKo21}. Despite of these similarities, our results do not establish a fundamental connection between privacy and switching constraints and we leave this as an open question for future research.

\newcommand{\Adv}{\mathsf{Adv}}
\newcommand{\Fl}{\mc{L}} 

\section{Problem setting and preliminaries}
\label{sec:pre}
{Online prediction from experts} (OPE) is an interactive $T$-round game between an online algorithm $\A$ and adversary $\Adv$. At round $t$, the algorithm $\A$ chooses an expert $x_t \in [d]$ and the adversary $\Adv$ picks a loss function $\ell_t \in \Fl = \{\ell \mid  \ell : [d] \to [0,1]\}$ simultaneously. We let $\A_t(\ell_1,\dots,\ell_{t-1}) = x_t$ denote the mapping of algorithm $\A$ at round $t$. Similarly, we define $\Adv_t$ to be the mapping of the adversary at round $t$ (we provide more details on the input of $\Adv_t$ below depending on the type of the adversary).
The algorithm observes $\ell_t$ (after choosing $x_t$) and incurs loss $\ell_t(x_t)$. For a predefined number of rounds $T$, the regret of the algorithm $\A$ is defined as 
\begin{equation*}
    \reg_T(\A) = \sum_{t=1}^T \ell_t(x_t) - \min_{x^\star \in [d]} \sum_{t=1}^T \ell_t(x^\star).
\end{equation*}
%
%
%
We consider three types of adversary $\Adv$ for choosing the sequence of loss functions $\{\ell_t\}_{t=1}^T$. To help define privacy, we will diverge from the traditional presentation of these models in online learning literature. The adversary will consist of a sequence of $T$ data points $z_1,\dots,z_T \in \domain$, and an algorithm $\ell$ that generates the sequence of losses.

For both stochastic and oblivious adversaries the loss function at step $t$ is generated based on data point $z_t$ alone; i.e. $\ell_t(\cdot) = \ell(\cdot; z_t)$, where for all $z \in \domain$, $\ell(\cdot; z)$ is an admissible loss function for the relevant setup. The two models differ in the choice of the sequence $z_1,\ldots,z_T$: for a stochastic adversary, the sequence of $z_i$'s is chosen i.i.d. from some distribution $P$ (chosen by the adversary). For an oblivious adversary, this sequence itself is adversarially chosen. In other words:
\begin{align*}
    Reg_T^{\mbox{(stochastic)}}(\A) &= \sup_{\ell, P} \mathbb{E}_{z_1,\ldots,z_T \sim P^T} [Reg_T(\A) | \ell_t(\cdot) = \ell(\cdot; z_t)],\\
    Reg_T^{\mbox{(oblivious)}}(\A) &= \sup_{\ell} \sup_{z_1,\ldots,z_T \in \domain^T} [Reg_T(\A) | \ell_t(\cdot) = \ell(\cdot; z_t)].
\end{align*}

In the case of an adaptive adversary, the loss at step $t$ can depend on the algorithm's choices in previous steps. Thus $\ell_t(\cdot) = \ell(\cdot; z_t, x_{1:t-1})$, where as before this loss function is constrained to be an admissible loss function for all possible values of inputs $z_t, x_{1:t-1}$. The adaptive regret is then the worst case regret over $z_1,\ldots,Z_T$ and the mapping $\ell$:
\begin{align*}
Reg_T^{\mbox{(adaptive)}}(\A) &= \sup_{\ell} \sup_{z_1,\ldots,z_T \in \domain^T} [Reg_T(\A) | \ell_t(\cdot) = \ell(\cdot; z_t, x_{1:t-1})].
\end{align*}


Given data $z_{1:T} = (z_1,\dots,z_T) \in \domain^T$, we let $\A \circ \Adv(z_{1:T}) = x_{1:T}$ denote the output of the interaction between the algorithm $\A$ and adversary $\Adv$ given inputs $z_{1:T}$.

Under this setting, the goal is to design private algorithms that minimize the appropriate notion of regret. To this end, we extend the standard definition of \ed-differential privacy~\citep{DworkMcNiSm06,DworkKeMcMiNa06} to the online setting. 
Like most previous works, we study a stronger notion of privacy that holds even against adaptive adversaries.\footnote{\citet{JainRaSiSm21} recently formalized DP against adaptive adversaries for a different online learning problem. Their notion is equivalent to ours, but our presentation may be easier to work with.} 
%
%
\begin{definition}[Adaptive DP]
\label{def:DP}
	A randomized  algorithm $\A$ is \ed-differentially private against adaptive adversaries (\ed-DP) if, for all sequences $\Ds=(z_1,\dots,z_T) \in \domain^T$ and $\Ds'=(z'_1,\dots,z'_T) \in \domain^T$ that differ in a single element, for any $\ell$ defining an adaptive adversary $\Adv$, and for all events $\cO$ in the output space of $\A \circ \Adv$, we have
	\[
	\Pr[\A \circ \Adv(\Ds)\in \cO] \leq e^{\eps} \Pr[\A \circ \Adv(\Ds')\in \cO] +\delta.
	\]
\end{definition}

As remarked earlier, all our algorithms will be differentially private against adaptive adversaries, and the other adversary models are considered only from the the point of view of utility. This is consistent with a long line of work on private learning algorithms, where privacy is proven for worst-case inputs while utility bounds often make distributional or other assumptions on the data.

\section{Private algorithms for prediction from experts}
\label{sec:upper-bounds}
We begin our algorithmic contribution by developing new algorithms for oblivious (\cref{sec:ub-obl-sd}) and stochastic adversaries (\cref{sec:ub-stoch}). 
The main idea is to save the privacy budget by limiting the number of model updates. Though, the way in which this is done varies significantly depending on the adversary: for stochastic adversaries, we allow for $\log(T)$ updates at fixed time-steps $t=2^i$, while for oblivious adversaries we employ a more adaptive time-step strategy for updates based on the underlying data.

\subsection{Oblivious adversaries using shrinking dartboard}
\label{sec:ub-obl-sd}
In this section, we present our main algorithms for DP-OPE with oblivious adversaries. We build on the shrinking dartboard algorithm~\citep{GeulenVoWi10} to develop a private algorithm that improves over the best existing results for both pure and approximate DP. For $\diffp$-DP, our algorithm obtains regret $\diffp^{-1} \sqrt{T} \log d$ which nearly matches the non-private regret for constant $\diffp$; this is the first algorithm for pure DP that achieves sub-linear regret for oblivious adversaries. For approximate DP, our algorithm obtains regret $\sqrt{T \log d } + \diffp^{-1} (T\log(1/\delta))^{1/3} \log d$: our algorithm achieves negligible privacy cost compared to the non-private regret in the high-dimensional regime. Previous algorithms  obtain regret roughly $\diffp^{-1} \sqrt{T \log d \log(1/\delta)}$ for $d \ge T$ which is $\diffp^{-1} \sqrt{\log(1/\delta)}$ larger than the non-private regret.  

Key to our improvements in this section (similarly to limited switching OPE; \citealp{GeulenVoWi10}) is the observation that the distribution of the multiplicative weights (MW) algorithm does not change significantly between consecutive time-steps. Thus, by using a correlated sampling procedure, we can preserve the same marginal distribution as MW while updating the model with small probability. As this limits the number of model updates, this will allow us to assign higher privacy budget for each model update, hence improving its utility.

Another obstacle that arises from this approach is that the switching probability (to preserve the same marginal distribution) depends on the underlying data $p_{\mathsf{switch}} = 1 - (1-\eta)^{\ell_{t-1}(x_{t-1})}$. This is clearly not private as the probability of switching is zero whenever $\ell_{t-1}(x_{t-1})=0$. To tackle this issue, we guarantee privacy by adding another switching step that forces the algorithm to switch with probability $p$ regardless of the underlying data.

\begin{algorithm}
	\caption{Private Shrinking Dartboard}
	\label{alg:SD}
	\begin{algorithmic}[1]
	    \REQUIRE Step size $\eta > 0$, switching probability $p \in [0,1]$, switching budget $K$
		\STATE Set $w^1_i = 1$, $p^1_i = \frac{1}{d}$ for all $i \in [d]$
		\STATE Choose expert $x_1$ from the distribution $P^1 = (p^1_1,\dots,p^1_d)$
		\STATE Set $k = 1$
        \FOR{$t=2$ to $T$\,}
            \STATE Set $w^t_i = w^{t-1}_i (1 - \eta)^{\ell_{t-1}(i)}$ for all $i \in [d]$
            \STATE Set $p^t_i = \frac{w^t_i}{\sum_{i'=1}^d w^t_{i'}}$ for all $i \in [d]$
            \STATE Sample $z_t \sim \mathsf{Ber}(1-p)$
            \IF{$z_t=1$}
            \STATE Sample $z_t \sim \mathsf{Ber}(w^t_{x_{t-1}}/w^{t-1}_{x_{t-1}})$
            \ENDIF
            \IF{$z_t = 1$}
                \STATE $x_t = x_{t-1}$
            \ELSIF{$k < K$}
                \STATE $k = k + 1$
                \STATE Sample $x_t$ from $P^t$
            \ENDIF
            \STATE Receive $\ell_t: [d] \to [0,1]$
            \STATE Pay cost $\ell_t(x_t)$
        \ENDFOR
	\end{algorithmic}
\end{algorithm}

The following theorem states the privacy guarantees and the upper bounds on the regret of~\cref{alg:SD}. We defer the proof to~\cref{sec:proof-thm-ub-DS}.
\begin{theorem}
\label{thm:ub-priv-DS}
    Let $\ell_1,\dots,\ell_T \in [0,1]^d$ be chosen by an oblivious adversary. \cref{alg:SD} with $p < 1/2$, $\eta<1/2$, and $K = 4 T p $ has regret
    \begin{equation*}
     \E\left[ \sum_{t=1}^T \ell_t(x_t) - \min_{x \in [d]} \sum_{t=1}^T \ell_t(x) \right]
        \le \eta T
        + \frac{\ln d}{\eta} + 2 T e^{-Tp/3}.
    \end{equation*}
    Moreover, \cref{alg:SD} is $(\diffp,\delta)$-DP where 
    \begin{equation*}
     \diffp = 
     \begin{cases}
         \frac{5\eta}{p} + 100 T p  \eta^2 + 20 \eta \sqrt{ T p \log(1/\delta)} & \text{if } \delta>0 ;\\
         \frac{\eta}{p} + 16 T p \eta & \text{if } \delta = 0 .
     \end{cases}
    \end{equation*}
\end{theorem}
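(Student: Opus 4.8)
The plan is to analyze the regret and the privacy of \cref{alg:SD} separately, treating the two switching steps (the outer $\mathsf{Ber}(1-p)$ step and the inner $\mathsf{Ber}(w^t_{x_{t-1}}/w^{t-1}_{x_{t-1}})$ step) as a single combined switching rule, and then controlling the event that the switching budget $K=4Tp$ is exhausted.

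\textbf{Regret.} First I would show that, conditioned on not running out of budget, the marginal distribution of $x_t$ equals $P^t$, the standard multiplicative-weights distribution. The key computation is that the total probability of ``staying'' at $x_{t-1}$ is $p \cdot 0 + (1-p)\cdot w^t_{x_{t-1}}/w^{t-1}_{x_{t-1}}$; wait — more carefully, the algorithm keeps $x_t=x_{t-1}$ exactly when $z_t=1$, which happens with probability $(1-p)\cdot w^t_{x_{t-1}}/w^{t-1}_{x_{t-1}}$, and otherwise resamples from $P^t$. I would verify by induction (exactly as in \citet{GeulenVoWi10}) that if $x_{t-1}\sim P^{t-1}$ then this correlated update yields $x_t \sim P^t$: the ``resample'' mass is $\bigl(1-(1-p) w^t_{x_{t-1}}/w^{t-1}_{x_{t-1}}\bigr)$ averaged over $x_{t-1}\sim P^{t-1}$, and one checks this equals $1$ minus $\sum_i p^t_i \cdot (\text{stay prob from } i)$ appropriately, so the resampling exactly corrects the marginal back to $P^t$. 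Since $(1-p)\le 1$ and $w^t_i/w^{t-1}_i = (1-\eta)^{\ell_{t-1}(i)}\le 1$, the ``stay'' probability is a valid probability, so the algorithm is well-defined. Given the marginal equals $P^t$ on the budget-not-exhausted event, the standard MW regret bound gives $\eta T + \ln d / \eta$; on the complementary event we pay at most $T$, so we need $\Pr[k\ge K]\le$ something like $2e^{-Tp/3}$, which follows from a multiplicative Chernoff bound since each step switches with probability at most $p$ (the outer coin alone forces $z_t=0$ with probability $p$, and switching requires $z_t=0$), so the expected number of switches is at most $Tp = K/4$, and $\Pr[\text{Binomial}(T,p)\ge 4Tp]\le e^{-Tp\cdot(\text{const})}$; absorbing this into a $2Te^{-Tp/3}$ term.

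\textbf{Privacy.} This is the main obstacle. Here I would fix a coordinate $t_0$ at which two loss sequences differ, and argue that the distribution of the transcript $(x_1,\dots,x_T)$ changes by little. The crucial point is that the loss $\ell_{t_0}$ enters the algorithm only through the weights $w^t_i$ for $t>t_0$, and those affect the dynamics only through (i) the stay-probability at round $t_0+1$, namely $(1-p)(1-\eta)^{\ell_{t_0}(x_{t_0})}$, and (ii) all subsequent resampling distributions $P^t$. For (i): the ratio of stay-probabilities under the two datasets is $(1-\eta)^{\pm 1}$, so it is within $e^{\pm O(\eta)}$; but the conditional distribution of $x_{t_0+1}$ given the past is a mixture that puts weight roughly $p + (1-p)(1-(1-\eta)^{\ell_{t_0}(x_{t_0})})$ — at least $p$ — on the resampling branch, and the stay branch has its probability perturbed by a $(1-\eta)^{\pm1}$ factor; dividing through, the KL/Rényi divergence of one round's contribution is of order $(\eta/p)^2$ in the ``per-round'' sense but the first round alone contributes an $O(\eta/p)$ term (this explains the $5\eta/p$ and $\eta/p$ leading terms). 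For (ii): changing $\ell_{t_0}$ perturbs each later $P^t$ multiplicatively by $(1-\eta)^{\pm1}$, i.e. by $e^{\pm O(\eta)}$, but $P^t$ is only \emph{used} on rounds where a switch occurs, of which there are at most $K=4Tp$; so the total contribution is like $K$ independent ``channels'' each leaking $O(\eta)$, giving a pure-DP bound $O(\eta/p) + O(K\eta) = O(\eta/p + Tp\eta)$, and by advanced composition / a Rényi-divergence argument an approximate-DP bound $O(\eta/p) + O(K\eta^2) + O(\eta\sqrt{K\log(1/\delta)}) = O(\eta/p + Tp\eta^2 + \eta\sqrt{Tp\log(1/\delta)})$. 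The budget cap $K$ is what makes this composition finite; without it, up to $T$ switches could occur and the bound would degrade.

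\textbf{What I expect to be delicate.} The genuinely subtle step is making the per-round privacy accounting rigorous: the transcript is a sequence of \emph{correlated} coin flips and resamplings, not a clean adaptive composition of DP mechanisms, because the switching coin at round $t$ depends on $x_{t-1}$ which is itself part of the output. I would handle this by writing the log-likelihood ratio of the whole transcript as a telescoping sum of per-round conditional log-likelihood ratios (a martingale), bounding each term's magnitude (by $O(\eta)$, from the $(1-\eta)^{\pm1}$ factors, except round $t_0+1$ which contributes $O(\eta/p)$ because of the division by the $\ge p$ resampling mass) and each term's conditional variance (by $O(\eta^2)$, except the $t_0+1$ term), noting that a per-round term is nonzero only when that round either is $t_0+1$ or resamples from $P^t$ — hence at most $K$ nonzero ``$P^t$-type'' terms — and then invoking a standard concentration-of-log-likelihood-ratio argument (Azuma/Bernstein for martingales, as in the proof of advanced composition) to convert the sum-of-bounded-variance-increments into the stated $(\eps,\delta)$ guarantee, and the cruder sum-of-absolute-values into the pure-DP guarantee. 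The constants ($5$, $100$, $20$, $16$) come out of this concentration argument and I would not track them tightly in the sketch.
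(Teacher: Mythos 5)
Your proposal follows essentially the same route as the paper: for the regret, show the marginal of $x_t$ coincides with the multiplicative-weights distribution up to the budget-exhaustion event, which is controlled by a Chernoff bound given $K=4Tp$; for privacy, establish per-round indistinguishability — $\eta/p$ at the round where the differing loss first enters (because the resampling mass is at least $p$) and $O(\eta)$ for each subsequent round in which a resample from $P^t$ occurs, of which there are at most $K$ — and combine by basic composition for pure DP and advanced composition for approximate DP (your martingale/Azuma framing is precisely how advanced composition is proved, and the paper invokes it directly with the data-dependent per-round parameters $\indic{\text{budget not exhausted}}\,4Y_t\eta$). One small correction: the per-round switch probability is not at most $p$ — the outer coin landing $0$ is sufficient, not necessary, for a switch — but rather $p+(1-p)\bigl(1-(1-\eta)^{\ell_{t-1}(x_{t-1})}\bigr)\le p+(1-p)\eta\le 2p$ in the relevant regime $\eta\lesssim p$, which is exactly why the budget is set to $K=4Tp$; the Chernoff bound then still yields the $2Te^{-Tp/3}$ term, so your conclusion stands with the corrected constant.
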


\iftoggle{arxiv}{
Before proving~\cref{thm:ub-priv-DS}, we illustrate the implication of this result when optimizing the parameters in the algorithm. We have the following regret for approximate DP (see~\cref{sec:apdx-cor-sd-appr} for the proof).}
{
We illustrate the implication of this result when optimizing the parameters in the algorithm. We have the following regret for approximate DP (see~\cref{sec:apdx-cor-sd-appr} for the proof).
}
\begin{corollary}
\label{cor:sd-appr}
    Let $\ell_1,\dots,\ell_T \in [0,1]^d$ be chosen by an oblivious adversary.
    Let $\diffp \le 1$ and set $\diffp_0 = \min(\diffp/2, \log^{1/3}(1/\delta) T^{-1/6} \sqrt{\ln d})$ and $0 < \delta \le 1$ such that $T \ge \Omega(\log(1/\delta))$.  Setting $p = 1/(T \log(1/\delta))^{1/3}$ and $\eta = p \diffp_0/20$, \cref{alg:SD} is \ed-DP and has regret 
    \begin{equation*}
     \E\left[ \sum_{t=1}^T \ell_t(x_t) - \min_{x \in [d]} \sum_{t=1}^T \ell_t(x) \right]
     \le O \left( \sqrt{T \ln d } + \frac{T^{1/3} \log^{1/3}(1/\delta) \ln d}{\diffp}  \right).
    \end{equation*}
\end{corollary}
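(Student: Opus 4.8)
The plan is to prove the corollary by plugging the stated choices $p=(T\log(1/\delta))^{-1/3}$ and $\eta=p\diffp_0/20$ into \cref{thm:ub-priv-DS} and simplifying, handling the two branches of the minimum defining $\diffp_0$ separately. The single algebraic fact driving everything is $p^3=1/(T\log(1/\delta))$, equivalently $Tp^3=1/\log(1/\delta)$ and $p\sqrt{Tp}=\sqrt{Tp^3}=1/\sqrt{\log(1/\delta)}$. First I would check the hypotheses of \cref{thm:ub-priv-DS}: since $T\ge\Omega(\log(1/\delta))$ and we may assume WLOG $\log(1/\delta)\ge 1$ (a larger $\delta$ only weakens the privacy requirement), we get $p<1/2$, and $\eta=p\diffp_0/20\le p/40<1/2$ using $\diffp_0\le\diffp/2\le1/2$; moreover $Tp=T^{2/3}\log^{-1/3}(1/\delta)\ge1$, so $K=4Tp$ is a well-defined switching budget.

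For privacy, substitute $\eta=p\diffp_0/20$ into the $\delta>0$ expression of \cref{thm:ub-priv-DS}: the first term is $5\eta/p=\diffp_0/4$; the second is $100Tp\eta^2=Tp^3\diffp_0^2/4=\diffp_0^2/(4\log(1/\delta))\le\diffp_0/4$, where we use $\diffp_0\le1/2\le\log(1/\delta)$; the third is $20\eta\sqrt{Tp\log(1/\delta)}=\diffp_0\cdot p\sqrt{Tp}\cdot\sqrt{\log(1/\delta)}=\diffp_0$. Adding up, $\diffp\le 3\diffp_0/2\le 3\diffp/4<\diffp$ by the choice $\diffp_0\le\diffp/2$, so \cref{alg:SD} is \ed-DP.

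For utility, the tail term is easy: $\log(1/\delta)\le O(T)$ gives $Tp\ge T^{1/3}$, hence $2Te^{-Tp/3}\le 2Te^{-T^{1/3}/3}=O(1)$, which is absorbed into $O(\sqrt{T\ln d})$. It remains to bound $\eta T+(\ln d)/\eta$. If $\diffp_0=\diffp/2$, then $\eta=p\diffp/40$, so $(\ln d)/\eta=40\,T^{1/3}\log^{1/3}(1/\delta)\ln d/\diffp$, which is exactly the second term of the claimed bound, and $\eta T=\diffp T^{2/3}/(40\log^{1/3}(1/\delta))\le\sqrt{T\ln d}/20$, the last step using that in this branch $\diffp\le 2\log^{1/3}(1/\delta)T^{-1/6}\sqrt{\ln d}$. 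If instead $\diffp_0=\log^{1/3}(1/\delta)T^{-1/6}\sqrt{\ln d}$, the powers of $\log(1/\delta)$ cancel and $\eta=\sqrt{\ln d}/(20\sqrt T)$, whence $\eta T=\sqrt{T\ln d}/20$ and $(\ln d)/\eta=20\sqrt{T\ln d}$, both $O(\sqrt{T\ln d})$. Taking the worse of the two branches gives the stated regret $O(\sqrt{T\ln d}+T^{1/3}\log^{1/3}(1/\delta)\ln d/\diffp)$.

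I do not anticipate a genuine obstacle here — the corollary is a parameter-tuning computation on top of \cref{thm:ub-priv-DS} — but the one step requiring care is controlling the middle privacy term $100Tp\eta^2$, which is precisely where the normalization $\log(1/\delta)\ge1$ (hence $\diffp_0\le\log(1/\delta)$) enters, together with careful bookkeeping of constants so that each of the three privacy contributions comes out at most $\diffp_0$.
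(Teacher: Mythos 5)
Your proposal is correct and follows essentially the same route as the paper: substitute $p=(T\log(1/\delta))^{-1/3}$ and $\eta=p\diffp_0/20$ into \cref{thm:ub-priv-DS}, use $Tp^3=1/\log(1/\delta)$ to reduce the three privacy terms to a constant multiple of $\diffp_0\le\diffp/2$, and bound $\eta T+(\ln d)/\eta$ plus the exponentially small tail by the claimed regret. Your branch-by-branch treatment of the minimum defining $\diffp_0$ is just a slightly more explicit organization of the same final step the paper performs.
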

Moreover, we have the following regret for pure DP (proof in~\cref{sec:apdx-cor-pure}).
\begin{corollary}
\label{cor:pure}
    Let $\ell_1,\dots,\ell_T \in [0,1]^d$ be chosen by an oblivious adversary.
    Let $\diffp \le 1$. Setting $p = 1/\sqrt{T}$ and $\eta = p \diffp/20$, \cref{alg:SD} is $\diffp$-DP and has regret 
    \begin{equation*}
     \E\left[ \sum_{t=1}^T \ell_t(x_t) - \min_{x \in [d]} \sum_{t=1}^T \ell_t(x) \right]
     \le O \left( \frac{\sqrt{T} \ln d}{\diffp}  \right).
    \end{equation*}
\end{corollary}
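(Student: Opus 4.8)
The plan is to instantiate \cref{thm:ub-priv-DS} with the parameters prescribed in the corollary, namely $p = 1/\sqrt{T}$, $\eta = p\diffp/20 = \diffp/(20\sqrt{T})$, and $K = 4Tp = 4\sqrt{T}$, and then read off both conclusions by direct substitution. Before invoking the theorem I would check its hypotheses: $p = 1/\sqrt{T} < 1/2$ holds for all $T \ge 5$ (the finitely many smaller values of $T$ contribute only an additive constant, absorbed into the $O(\cdot)$), and $\eta = \diffp/(20\sqrt{T}) < 1/2$ since $\diffp \le 1$. Thus \cref{thm:ub-priv-DS} applies with these settings.

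For privacy I would substitute into the $\delta = 0$ branch of the theorem. We have $\eta/p = \diffp/20$, and $16Tp\eta = 16Tp\cdot(p\diffp/20) = (16\diffp/20)\,Tp^2 = 16\diffp/20$ because $Tp^2 = 1$. Hence the privacy parameter equals $\eta/p + 16Tp\eta = \diffp/20 + 16\diffp/20 = 17\diffp/20 \le \diffp$, so \cref{alg:SD} is $\diffp$-DP, using monotonicity of differential privacy in the privacy parameter (alternatively one rescales $\eta$ by a constant to hit $\diffp$ exactly).

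For the regret I would substitute the same parameters into the bound $\eta T + (\ln d)/\eta + 2T e^{-Tp/3}$ of \cref{thm:ub-priv-DS}. The first term becomes $\eta T = \diffp\sqrt{T}/20$, the second becomes $(\ln d)/\eta = 20\sqrt{T}\ln d/\diffp$, and the third becomes $2T e^{-\sqrt{T}/3}$, which is $O(1)$ (indeed $o(1)$) since the exponential decay dominates the polynomial growth. Using $\diffp \le 1$ (and, to cover $d \le 2$, interpreting $\ln d$ as $\max(\ln d,1)$ throughout), the first term is at most $\sqrt{T}/20 \le 20\sqrt{T}\ln d/\diffp$, so the privacy term dominates and the total regret is $O(\sqrt{T}\ln d/\diffp)$, as claimed.

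There is essentially no obstacle here beyond bookkeeping: all the substantive work — the privacy analysis via the correlated-sampling (forced-switch) argument and the utility analysis of shrinking dartboard — lives in \cref{thm:ub-priv-DS}. The only two points needing a moment's care are (i) verifying the parameter choices are admissible, which may force a trivial separate treatment of a constant range of $T$, and (ii) confirming that the forced value $\eta = \Theta(\diffp/\sqrt{T})$, which lies below the unconstrained multiplicative-weights optimum $\sqrt{\ln d/T}$ whenever $\diffp \le 1$, makes $(\ln d)/\eta$ the dominant term; this is precisely why the bound is $\sqrt{T}\ln d/\diffp$ rather than the non-private $\sqrt{T\ln d}$.
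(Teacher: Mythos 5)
Your proposal is correct and follows the same route as the paper: plug $p=1/\sqrt{T}$, $\eta=p\diffp/20$ into the $\delta=0$ branch of \cref{thm:ub-priv-DS}, getting privacy $\eta/p+16Tp\eta\le\diffp$ (since $Tp^2=1$) and regret $\eta T+(\ln d)/\eta+2Te^{-Tp/3}=O(\sqrt{T}\ln d/\diffp)$ using $\diffp\le1$. Your extra care about the hypotheses $p,\eta<1/2$, small $T$, and the negligible exponential term is fine bookkeeping but does not change the argument.
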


\subsubsection{Private batched shrinking dartboard}
For smaller values of $\diffp$, we present a batch version of~\cref{alg:SD} that groups losses in a batch of size $B$ and applies the same update. 
For a batch size $B$, we define the grouped loss
\begin{equation*}
    \tilde \ell_t = \frac{1}{B} \sum_{i=Bt}^{B(t+1)} \ell_i.
\end{equation*}
The batch version of~\cref{alg:SD} then runs~\cref{alg:SD} on the grouped loss $\tilde \ell_t $ for $\tilde T = \ceil{T/B}$ iterations.
\iftoggle{arxiv}{%
The following theorem state the regret of this algorithm. We prove the theorem in~\cref{sec:apdx-thm-ub-priv-DS-batch}
\begin{theorem}
\label{thm:ub-priv-DS-batch}
    Let $\ell_1,\dots,\ell_T \in [0,1]^d$ be chosen by an oblivious adversary. \cref{alg:SD} with batch size $1 \le B \le T$, $p < 1/2$, $\eta<1/2$, and $K = 4 T p/B $ has regret
    \begin{equation*}
     \E\left[ \sum_{t=1}^T \ell_t(x_t) - \min_{x \in [d]} \sum_{t=1}^T \ell_t(x) \right]
        \le \eta T
        + \frac{B \ln d}{\eta} + 2 T e^{-Tp/3B}.
    \end{equation*}
    Moreover, for $\delta>0$, \cref{alg:SD} is $(\diffp,\delta)$-DP where 
    \begin{equation*}
     \diffp = 
          \frac{5\eta}{Bp} + 100 T p  \eta^2/B^3 + \frac{20\eta}{B} \sqrt{12 T p/B \log(1/\delta)}
          . 
    \end{equation*}
\end{theorem}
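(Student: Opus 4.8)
The plan is to reduce to \cref{thm:ub-priv-DS} applied to the sequence of grouped losses $\tilde\ell_1,\dots,\tilde\ell_{\tilde T}$ with $\tilde T=\ceil{T/B}$, and then translate both the regret and the privacy guarantee back to the original loss sequence, keeping careful track of the fact that each original loss enters its grouped loss with weight $1/B$. For the regret: by construction the batched algorithm plays one expert throughout each block of $B$ consecutive rounds, so if $\tilde x_s$ is the expert chosen by \cref{alg:SD} (run on $\tilde\ell_1,\dots,\tilde\ell_{\tilde T}$ with switching budget $K=4\tilde Tp$) at its $s$-th step, then $x_t=\tilde x_{\ceil{t/B}}$. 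Summing the original losses over block $s$ equals $B\,\tilde\ell_s(\tilde x_s)$, and likewise $\min_x\sum_{t=1}^T\ell_t(x)=B\min_x\sum_{s=1}^{\tilde T}\tilde\ell_s(x)$, so the original regret is exactly $B$ times the regret of \cref{alg:SD} on the grouped sequence. Since the grouped losses again take values in $[0,1]^d$, \cref{thm:ub-priv-DS} bounds the grouped regret by $\eta\tilde T+\ln d/\eta+2\tilde Te^{-\tilde Tp/3}$; multiplying by $B$ and using $T\le B\tilde T\le T+B$ and $\tilde T\ge T/B$ gives the stated bound up to the trivial rounding of $T/B$.

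For privacy the key observation is that neighboring datasets $\Ds,\Ds'$ differing in a single data point $z_i$ induce grouped-loss sequences that differ \emph{only} in the single block $s=\ceil{i/B}$, and there $\|\tilde\ell_s-\tilde\ell'_s\|_\infty\le 1/B$ coordinatewise, since a single $\ell_i$ moves the block average by at most $1/B$. I would revisit the privacy proof of \cref{thm:ub-priv-DS} and observe that every place where the magnitude of the change in a loss function enters is now driven by a perturbation of size $1/B$ rather than $1$: the multiplicative shift $(1-\eta)^{\ell_{t-1}(x_{t-1})}$ that governs the correlated-sampling Bernoulli changes the stay-probability by a factor $(1-\eta)^{\pm 1/B}\approx 1\pm\eta/B$, the forced-switching correction is unaffected (it has probability $p$ independent of the data), and the change in the marginal $P^t$ for downstream $t>s$ is similarly $\eta/B$-controlled. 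Consequently every additive contribution to $\diffp$ scales as if $\eta$ were replaced by $\eta/B$ while the horizon is $\tilde T=T/B$ and the switching budget is $K=4\tilde Tp$; substituting into the $\delta>0$ branch of \cref{thm:ub-priv-DS} — and replacing the expected number of realized switches by its high-probability upper bound $3K=12Tp/B$ from a Chernoff bound — yields $\diffp=\tfrac{5\eta}{Bp}+100\,\tilde Tp(\eta/B)^2+\tfrac{20\eta}{B}\sqrt{3K\log(1/\delta)}=\tfrac{5\eta}{Bp}+\tfrac{100Tp\eta^2}{B^3}+\tfrac{20\eta}{B}\sqrt{12Tp/B\,\log(1/\delta)}$, as claimed.

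The delicate part is precisely this privacy step: one cannot invoke \cref{thm:ub-priv-DS} as a black box, because it is stated for neighbors differing in a loss of full magnitude $1$, whereas here the effective sensitivity is $1/B$. I would therefore re-run its proof with the sensitivity parameter carried symbolically, checking in particular that (i) the forced-switching step still dominates the per-round privacy loss at the one affected transition and contributes $O(\eta/(Bp))$, and (ii) advanced composition over the $O(\tilde T)$ downstream rounds, whose number of actual model updates is controlled by $K$ via the Chernoff bound, produces the $\sqrt{Tp/B}$ factor with the stated constants. Everything else — the block-to-block regret reduction, the hypothesis $B\le T$, and the concentration bound on the switch count — is routine.
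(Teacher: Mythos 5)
Your proposal follows essentially the same route as the paper: the regret bound comes from running the analysis of \cref{thm:ub-priv-DS} on the grouped losses $\tilde\ell_1,\dots,\tilde\ell_{\tilde T}$ and multiplying by $B$, and privacy comes from re-running that theorem's privacy argument with per-block sensitivity $1/B$ (so the switching variable at the differing block is $\eta/(Bp)$-indistinguishable and each subsequent model update contributes $O(\eta/B)$), followed by advanced composition over at most $K=4Tp/B$ updates. One small remark: no Chernoff bound is needed in the privacy step, since the switching budget caps the number of updates at $K$ deterministically (the factor $\sqrt{12}$ in the stated bound is just slack), but since $3K\ge K$ this does not affect the correctness of your argument.
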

Optimizing the batch size $B$, we obtain the following regret. \cref{fig:comp} illustrates that this algorithm offers improvements over the original version (without batches) in the high-privacy regime. We defer the proof to~\cref{sec:adpx-cor-sd-batch}.
\begin{corollary}
\label{cor:sd-batch}
    Let $\frac{\log^{2/3}(1/\delta) \log(d)}{T} \le \diffp \le \frac{\log^{2/3}(1/\delta) \log(d)}{T^{1/3}} $ and $\delta \le 1$. Setting $B = \frac{\log^{2/5}(1/\delta) \log^{3/5}(d)}{T^{1/5} \diffp^{3/5}} $, $p = (\frac{B}{T \log(1/\delta)})^{1/3}$ and $\eta = B p \diffp/40$, \cref{alg:SD} is \ed-DP and has regret 
    \begin{equation*}
     \E\left[ \sum_{t=1}^T \ell_t(x_t) - \min_{x \in [d]} \sum_{t=1}^T \ell_t(x) \right]
     \le O \left( \frac{T^{2/5} \log^{1/5}(1/\delta) \log^{4/5}(d))  }{\diffp^{4/5}}  + 2 T e^{-Tp/3B} \right).
    \end{equation*}
\end{corollary}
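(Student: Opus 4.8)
The plan is to derive \cref{cor:sd-batch} as a pure substitution into \cref{thm:ub-priv-DS-batch}: plug in the prescribed values of $B$, $p$, $\eta$, $K$, check that they are admissible, and simplify the privacy and regret expressions. It is convenient to first record the ``dictionary'' forced by the choices. From $p=(B/(T\log(1/\delta)))^{1/3}$ we get the two identities $T p^{3}=B/\log(1/\delta)$ and $Tp/B=1/(p^{2}\log(1/\delta))$; expanding $B=\log^{2/5}(1/\delta)\log^{3/5}(d)/(T^{1/5}\diffp^{3/5})$ gives the closed forms $p=\log^{1/5}(d)/(T^{2/5}\diffp^{1/5}\log^{1/5}(1/\delta))$ and $Bp=\log^{1/5}(1/\delta)\log^{4/5}(d)/(T^{3/5}\diffp^{4/5})$. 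For admissibility: the upper hypothesis $\diffp\le \log^{2/3}(1/\delta)\log d/T^{1/3}$ is exactly $B\ge 1$, while the lower hypothesis $\diffp\ge \log^{2/3}(1/\delta)\log d/T$ yields $B\le T^{2/5}\le T$; consequently $p=O(T^{-1/5}\log^{-1/3}(1/\delta))<\tfrac12$, $\eta=Bp\diffp/40=o(1)<\tfrac12$, and $K=4Tp/B\ge 4$, so \cref{thm:ub-priv-DS-batch} applies.

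For privacy, write $\diffp_B$ for the parameter guaranteed by \cref{thm:ub-priv-DS-batch} and substitute $\eta=Bp\diffp/40$. The first term is $5\eta/(Bp)=5\diffp/40=\diffp/8$. Using $Tp^{3}=B/\log(1/\delta)$, the second term is $100Tp\eta^{2}/B^{3}=\diffp^{2}/(16\log(1/\delta))\le \diffp/16$, where the last inequality uses $\diffp\le 1\le\log(1/\delta)$ (assuming $\delta\le 1/e$, as is standard). Using $Tp/B=1/(p^{2}\log(1/\delta))$, the square root in the third term equals $\sqrt{12\cdot Tp/B\cdot\log(1/\delta)}=\sqrt{12}/p$, so the third term is $(20\eta/B)\cdot(\sqrt{12}/p)=\tfrac{\sqrt{12}}{2}\diffp=\sqrt3\,\diffp$. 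Hence $\diffp_B=O(\diffp)$; replacing the constant $40$ in the definition of $\eta$ by a large enough absolute constant makes $\diffp_B\le\diffp$ exactly, which changes the regret bounds below only by a constant factor, so \cref{alg:SD} (batched version, batch size $B$) is \ed-DP.

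For the regret, \cref{thm:ub-priv-DS-batch} gives the bound $\eta T+B\ln d/\eta+2Te^{-Tp/3B}$. Since $\eta=Bp\diffp/40$, the middle term equals $40\ln d/(p\diffp)=40\,T^{2/5}\log^{1/5}(1/\delta)\log^{4/5}(d)/\diffp^{4/5}$ after substituting $p$, which is exactly the claimed main term. The first term is $\eta T=\tfrac{T}{40}Bp\diffp=\tfrac{1}{40}T^{2/5}\log^{1/5}(1/\delta)\log^{4/5}(d)\,\diffp^{1/5}$, which is dominated by the main term because $\diffp\le 1$. The last term $2Te^{-Tp/3B}$ is carried through verbatim; it is in fact negligible in the stated range, since $Tp/3B=T^{4/5}\diffp^{2/5}/(3\log^{2/5}(d)\log^{3/5}(1/\delta))\ge T^{2/5}/(3\log^{1/3}(1/\delta))$ by the lower bound on $\diffp$, so $2Te^{-Tp/3B}$ decays faster than any polynomial in $T$; we keep it only to avoid an extra hypothesis. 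Combining the three terms yields the stated regret.

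There is no real obstacle here beyond careful bookkeeping of fractional exponents; the one conceptual point worth flagging is that the two\nobreakdash-sided constraint on $\diffp$ in the hypothesis is precisely the condition $1\le B\le T$ needed to invoke \cref{thm:ub-priv-DS-batch}, and the choice of $B$ is what balances the contributions $\eta T$ and $B\ln d/\eta$ to the regret. The only mild annoyances, typical of \ed-DP statements, are that the cleanest constants want $\log(1/\delta)\ge1$, and that the bound is vacuous unless $\eta<\tfrac12$, i.e.\ unless $\log^{1/5}(1/\delta)\log^{4/5}(d)=O(T^{3/5})$; this holds in the regime of interest $\log d=O(T)$ in which the non-private rate $\sqrt{T\log d}$ is itself sublinear.
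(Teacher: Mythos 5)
Your proof is correct and takes essentially the same route as the paper's: a direct substitution of the chosen $B$, $p$, $\eta$ into \cref{thm:ub-priv-DS-batch}, followed by simplification of the privacy and regret expressions (the paper keeps things in terms of $B$ and plugs in its value at the end, whereas you precompute closed forms for $p$ and $Bp$ — an immaterial difference). Your remark about the $\sqrt{12}$ factor making the stated constant $40$ give only $O(\diffp)$ rather than $\le\diffp$ is a fair observation — the paper's own calculation silently drops that factor — but this is a constant-factor bookkeeping issue, not a gap.
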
}
{
Following similar steps as in the previous section, we obtain the following regret bounds for this algorithm (proof in~\cref{sec:adpx-cor-sd-batch}). \cref{fig:comp} illustrates that this algorithm offers improvements over the original version (without batches) in the high-privacy regime.
\begin{theorem}
\label{cor:sd-batch}
    Let $\frac{\log^{2/3}(1/\delta) \log(d)}{T} \le \diffp \le \frac{\log^{2/3}(1/\delta) \log(d)}{T^{1/3}} $ and $\delta \le 1$. Setting $B = \frac{\log^{2/5}(1/\delta) \log^{3/5}(d)}{T^{1/5} \diffp^{3/5}} $, $p = (\frac{B}{T \log(1/\delta)})^{1/3}$ and $\eta = B p \diffp/40$, \cref{alg:SD} is \ed-DP and has regret 
    \begin{equation*}
     \E\left[ \sum_{t=1}^T \ell_t(x_t) - \min_{x \in [d]} \sum_{t=1}^T \ell_t(x) \right]
     \le O \left( \frac{T^{2/5} \log^{1/5}(1/\delta) \log^{4/5}(d))  }{\diffp^{4/5}}  + 2 T e^{-Tp/3B} \right).
    \end{equation*}
\end{theorem}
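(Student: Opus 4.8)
The plan is to obtain this as a direct corollary of the general guarantees for the batched algorithm (\cref{thm:ub-priv-DS-batch}), which themselves follow from \cref{thm:ub-priv-DS} applied to the grouped losses $\tilde\ell_t = \frac1B\sum_{i=Bt}^{B(t+1)}\ell_i$. So the first step is the grouping reduction: the $\tilde\ell_t$ again lie in $[0,1]^d$, and two original sequences adjacent in a single $\ell_i$ induce grouped sequences that are adjacent in the single block $\tilde\ell_{\ceil{i/B}}$, which now differ coordinatewise by at most $1/B$ rather than $1$. Pushing this through the privacy proof of \cref{thm:ub-priv-DS} amounts to rescaling the step size $\eta\mapsto\eta/B$ and the horizon $T\mapsto\ceil{T/B}$, which turns its privacy bound into the batched bound $\frac{5\eta}{Bp}+\frac{100Tp\eta^2}{B^3}+\frac{20\eta}{B}\sqrt{12Tp/B\,\log(1/\delta)}$. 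For utility, \cref{alg:SD} plays a single expert per block, so the regret against the $\tilde\ell_t$ over $\ceil{T/B}$ rounds, namely $\eta\tilde T+\frac{\ln d}{\eta}+2\tilde T e^{-\tilde Tp/3}$, becomes the regret against the original losses after multiplying by $B$, i.e.\ $\eta T+\frac{B\ln d}{\eta}+2Te^{-Tp/3B}$. (If \cref{thm:ub-priv-DS-batch} is taken as given, this step is skipped.)

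The second step is to substitute the stated parameters. The value of $p$ is chosen precisely so that $Tp^3 = B/\log(1/\delta)$, and $\eta = Bp\diffp/40$; with these two identities each of the three privacy terms collapses to a constant multiple of $\diffp$ — the first to $\diffp/8$, the second to $\tfrac{\diffp^2}{16\log(1/\delta)}$, the third to $\sqrt3\,\diffp$ — so using $\diffp\le1$ the algorithm is \ed-DP up to an absolute constant, which can be absorbed into the constant defining $\eta$. For the regret, the same substitution gives $\eta T = O\big(T^{2/5}\log^{1/5}(1/\delta)\log^{4/5}(d)\,\diffp^{1/5}\big)$ and $\frac{B\ln d}{\eta} = \frac{40\ln d}{p\diffp} = O\big(T^{2/5}\log^{1/5}(1/\delta)\log^{4/5}(d)\,\diffp^{-4/5}\big)$; since $\diffp\le1$ the second dominates, and adding the unchanged tail term $2Te^{-Tp/3B}$ yields the claimed bound. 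The choice $B=\log^{2/5}(1/\delta)\log^{3/5}(d)\,T^{-1/5}\diffp^{-3/5}$ is exactly the one that balances $\eta T$ against $\frac{B\ln d}{\eta}$ up to the $\diffp$-gap, which is why the exponents cooperate.

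The last step is to verify the hypotheses of \cref{thm:ub-priv-DS-batch} in the stated regime $\frac{\log^{2/3}(1/\delta)\log d}{T}\le\diffp\le\frac{\log^{2/3}(1/\delta)\log d}{T^{1/3}}$, $\delta\le1$: $B\ge1$ is equivalent to the upper bound on $\diffp$, $B\le T$ follows (more weakly) from the lower bound, and the lower bound together with $\diffp\le1$ forces $p<1/2$ and $\eta<1/2$; the switching buffer $K=4Tp/B$ is then as the theorem requires.

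I expect the only genuinely nontrivial part to be the grouping reduction: verifying that the $1/B$ coordinatewise sensitivity of $\tilde\ell_t$ enters the privacy analysis of \cref{alg:SD} exactly as the rescaling $\eta\mapsto\eta/B$ — so that the Bernoulli ratio $w^t_{x_{t-1}}/w^{t-1}_{x_{t-1}}=(1-\eta)^{\tilde\ell_{t-1}(x_{t-1})}$ changes by the correct factor between neighboring datasets — and that blockwise play converts grouped regret to true regret with exactly the factor $B$ and no extra loss. Everything after that is bookkeeping that works out only because $B$, $p$, $\eta$ were reverse-engineered to make the exponents cancel.
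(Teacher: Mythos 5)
Your proposal is correct and follows essentially the same route as the paper: invoke the batched guarantee of \cref{thm:ub-priv-DS-batch} (whose grouping/sensitivity-$1/B$ derivation from \cref{thm:ub-priv-DS} you sketch accurately), substitute the stated $B$, $p$, $\eta$ using $Tp^3=B/\log(1/\delta)$ to collapse the privacy terms, and balance $\eta T$ against $B\ln d/\eta$ to get the $T^{2/5}$ regret with the tail term carried along. Your only deviation is cosmetic: you keep the $\sqrt{12}$ factor in the third privacy term (getting $\sqrt{3}\,\diffp$ and absorbing it into the constant in $\eta$), whereas the paper's calculation silently drops it; your handling is the more careful of the two and changes nothing asymptotically.
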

}

\subsection{Stochastic adversaries}
\label{sec:ub-stoch}
In this section, we consider stochastic adversaries and present a reduction from private online learning problems---including OPE and online convex optimization (OCO) in general---to the (offline) differentially private stochastic convex optimization (DP-SCO)~\citep{BassilyFeTaTh19}. This reduction demonstrates that the (offline) DP-OCO problem is not harder than private online learning for stochastic adversaries: any algorithm for DP-SCO can be transformed into an online algorithm with stochastic adversaries with nearly the same rate (up to logarithmic factors). Using existing algorithms for DP-SCO~\citep{AsiFeKoTa21}, this reduction then results in regret $\sqrt{T\log d} +  \diffp^{-1} \log d \log T$ for private prediction from experts (\cref{cor:DP-exp-stoch}) and regret $\sqrt{T} + \diffp^{-1} \sqrt{d} \log T$ for general DP online convex optimization in $\ell_2$-geometry (\cref{cor:DP-OCO}).

Our reduction builds on algorithms for DP-SCO. In this problem, the loss functions are sampled i.i.d. from some distribution $\ell_i \simiid P$ where $\ell_i : \mc{X} \to \R$ and the goal is to minimize the population loss $L(x) = \E_{\ell \sim P}[\ell(x)]$ given $n$ samples $\ell_1,\dots,\ell_n \simiid P$. The performance of an algorithm $\A$ given $n$ samples is measured by its excess population loss, that is, $\Delta_n(\A) = \E[{L(\A(\ell_1,\dots,\ell_n)) - \inf_{x \in \mc{X}}L(x)}]$.


Given an algorithm $\A$ for DP-SCO,
we design an online algorithm that updates the model only a logarithmic number of times---during the time-steps $t=1,2,4,8,\dots,T$.  For each such time-step $t$, we run $\A$ on the past $t/2$ samples to release the next model. As the loss functions are from the same distribution, the previous model generated by $\A$ should perform well for future loss functions. 
We present the full details in~\cref{alg:stoch-adv}.

\begin{algorithm}
	\caption{Limited Updates for Online Optimization with Stochastic Adversaries}
	\label{alg:stoch-adv}
	\begin{algorithmic}[1]
	    \REQUIRE Parameter space $\mc{X}$, DP-SCO algorithm $\A$
		\STATE Set $x_0 \in \mc{X}$ 
		
        \FOR{$t=1$ to $T$\,}
            \IF{$t=2^\ell$ for some integer $\ell \ge 1$}
                \STATE Run an optimal \ed-DP-SCO algorithm $\A$ over $\mc{X}$ with samples $\ell_{t/2},\dots,\ell_{t-1}$.
                
                \STATE Let $x_t$ denote the output of the private algorithm
            \ELSE
                \STATE Let $x_t = x_{t-1}$
            \ENDIF
            \STATE Receive $\ell_t: \mc{X} \to \R$.
            \STATE Pay cost $\ell_t(x_t)$
        \ENDFOR
	\end{algorithmic}
\end{algorithm}
We have the following regret for~\cref{alg:stoch-adv}. We defer the proof to~\cref{sec:thm-ub-stoch-OCO}.
\begin{theorem}
\label{thm:ub-stoch-OCO}
    Let $\ell_1,\dots,\ell_T : \mc{X} \to \R$ be convex functions chosen by a stochastic adversary, $\ell_i \simiid P$. 
    Let $\A$ be a \ed-DP algorithm for DP-SCO. Then
    \cref{alg:stoch-adv} is \ed-DP and has regret
    \begin{equation*}
     \E\left[ \sum_{t=1}^T \ell_t(x_t) - \min_{x \in \mc{X}} \sum_{t=1}^T \ell_t(x) \right]
        \le \sum_{i=1}^{\log T} 2^i \Delta_{2^i}(\A).
    \end{equation*}
\end{theorem}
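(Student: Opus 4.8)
The plan is to prove privacy and utility separately; the heart of the argument is a block decomposition matched to the doubling schedule of updates. For \emph{privacy}, I would first observe that each data point is fed into the DP-SCO subroutine $\A$ in exactly one call: the sample $z_t$ (hence $\ell_t$) with $2^{i-1}\le t\le 2^i-1$ is used only at round $2^i$, and the ``$x_t=x_{t-1}$'' steps are deterministic. Hence replacing a single $z_j$ by $z_j'$ alters only the one model $x_{2^i}$ whose training window contains $j$; every other invocation of $\A$ runs on data disjoint from $\{z_j\}$, so the rest of the transcript is independent of $z_j$. Therefore $x_{1:T}$ is a post-processing of the output of a single \ed-DP mechanism applied to $z_j$ together with quantities independent of it, and so $\A\circ\Adv$ is \ed-DP — crucially with no composition loss across phases, which is precisely why the final guarantee is exactly \ed rather than a degraded pair. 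This argument only uses that $\ell_t$ is a function of $z_t$, so it is valid for the worst-case adjacency required by \cref{def:DP}.

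For \emph{utility}, I would relax the comparator to the population minimizer $x^\star\in\arg\min_{x\in\mc X}L(x)$, where $L(x)=\E_{\ell\sim P}[\ell(x)]$, giving $\min_{x}\sum_t\ell_t(x)\le\sum_t\ell_t(x^\star)$ pointwise and hence $\E[\reg_T(\A)]\le\sum_{t=1}^T\E[\ell_t(x_t)-\ell_t(x^\star)]$. Then I partition $\{1,\dots,T\}$ into the singleton $\{1\}$ and the blocks $B_i=\{2^i,\dots,2^{i+1}-1\}$ for $i=1,\dots,\log T$ (truncating the last block at $T$, which only decreases the sum): throughout $B_i$ the algorithm plays the fixed model $x_{2^i}$, which is the output of $\A$ on the $2^{i-1}$ i.i.d.\ samples $\ell_{2^{i-1}},\dots,\ell_{2^i-1}$. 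For any $t\in B_i$, the loss $\ell_t=\ell(\cdot;z_t)$ is generated from a fresh draw $z_t\sim P$ that is independent of $x_{2^i}$, so conditioning on $x_{2^i}$ and integrating over $z_t$ first gives
\[
\E[\ell_t(x_{2^i})-\ell_t(x^\star)]=\E[L(x_{2^i})]-L(x^\star)=\Delta_{2^{i-1}}(\A).
\]
Summing the $|B_i|=2^i$ identical terms over $B_i$, then over the at most $\log T$ blocks, bounding the $t=1$ term by a constant (equivalently by $\Delta_1(\A)$, since $\A$ is optimal), and re-indexing the resulting geometric sum, yields the claimed bound $\sum_{i=1}^{\log T}2^i\Delta_{2^i}(\A)$.

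The statement is not technically deep, and the single point that must be handled with care — and the reason for the specific ``train on the second half of the elapsed window'' schedule — is the independence claim in the utility proof. Training phase $i$ on the window $\{2^{i-1},\dots,2^i-1\}$ makes that phase's sample simultaneously (i) disjoint from every other phase's training sample, which is exactly what delivers clean \ed-DP with no composition blow-up, and (ii) independent of all the losses $\{\ell_t:t\in B_i\}$ against which the model $x_{2^i}$ is subsequently evaluated, which is exactly what lets us replace the played empirical loss $\ell_t(x_{2^i})$ by the population loss $L(x_{2^i})$ and invoke the DP-SCO excess-risk guarantee $\Delta_{2^{i-1}}(\A)$. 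Everything else — the comparator relaxation and the geometric summation — is routine bookkeeping.
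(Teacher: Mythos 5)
Your proposal is correct and follows essentially the same route as the paper's proof: privacy from the fact that each sample enters exactly one invocation of $\A$ (everything else being post-processing), and utility from the phase decomposition along the doubling schedule, using independence of the block's losses from the model trained on the preceding half-window to invoke the DP-SCO excess population loss, then summing the geometric series. Your write-up is in fact more careful than the paper's (explicit population comparator $x^\star$ and the $\Delta_{2^{i-1}}$ vs.\ $\Delta_{2^i}$ indexing, which the paper absorbs into an $O(\cdot)$), so no gap to report.
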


\newcommand{\Aopt}{\A_{\mathsf{\ell_2}}}
\newcommand{\Aone}{\A_{\mathsf{\ell_1}}}
Now we present the implications of this reduction for DP-OPE with stochastic adversaries. We use 
the algorithm for DP-SCO in $\ell_1$-geometry from~\cite{AsiFeKoTa21}. This algorithm, denoted $\Aone$, has excess population loss $\Delta_n(\Aone) = O(\sqrt{\log(d)/n} + \log(d)/n\diffp)$. We have the following result which we prove in~\cref{sec:apdx-cor-DP-exp-stoch}.
\begin{corollary}
\label{cor:DP-exp-stoch}
    Let $\ell_1,\dots,\ell_T : [d] \to [0,1]$ be  chosen by a stochastic adversary, $\ell_i \simiid P$. 
    Then
    \cref{alg:stoch-adv} using $\Aone$ is $\diffp$-DP and has regret
    \begin{equation*}
     \E\left[ \sum_{t=1}^T \ell_t(x_t) - \min_{x \in [d]} \sum_{t=1}^T \ell_t(x) \right]
        \le  O \left(\sqrt{T\log d} +  \frac{\log d \log T}{\diffp} \right).
    \end{equation*}   
\end{corollary}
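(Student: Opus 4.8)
The plan is to apply \cref{thm:ub-stoch-OCO} essentially as a black box: it already establishes that \cref{alg:stoch-adv} is \ed-DP whenever the base algorithm $\A$ is \ed-DP for DP-SCO, and it reduces the regret bound to controlling the sum $\sum_{i=1}^{\log T} 2^i \Delta_{2^i}(\A)$. So the only work left is to instantiate $\A = \Aone$, substitute its excess population loss guarantee $\Delta_n(\Aone) = O(\sqrt{\log d / n} + \log d/(n\diffp))$ from \cite{AsiFeKoTa21}, and evaluate the resulting series.

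Plugging in $n = 2^i$ gives $2^i \Delta_{2^i}(\Aone) = O(\sqrt{2^i \log d} + \log d / \diffp)$, hence
\begin{equation*}
    \sum_{i=1}^{\log T} 2^i \Delta_{2^i}(\Aone) = O\!\left( \sqrt{\log d}\sum_{i=1}^{\log T} 2^{i/2} + \frac{\log d}{\diffp}\sum_{i=1}^{\log T} 1 \right).
\end{equation*}
The first sum is geometric with ratio $\sqrt 2 > 1$, so up to a constant it equals its last term $2^{(\log T)/2} = \sqrt T$, contributing $O(\sqrt{T\log d})$; the second sum contributes $O(\log d \log T/\diffp)$. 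Adding these gives the claimed bound $O(\sqrt{T\log d} + \log d\log T/\diffp)$. The key qualitative point is that the statistical term forms a geometric series dominated by its final term, which is why we pay $\sqrt{T\log d}$ rather than $\sqrt{T\log d}\,\log T$.

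For privacy, \cref{thm:ub-stoch-OCO} already yields $\diffp$-DP, but the reason is worth recalling: the invocations of $\Aone$ at the time-steps $t = 2^\ell$ use the disjoint sample blocks $\{\ell_{2^{\ell-1}},\dots,\ell_{2^\ell-1}\}$, which partition $\{\ell_1,\dots,\ell_{T-1}\}$; changing a single $\ell_i$ therefore affects exactly one block and hence, in distribution, only the one model computed from that block, so parallel composition together with the $\diffp$-DP guarantee of $\Aone$ and post-processing (the non-update rounds simply copy the previous model) gives $\diffp$-DP for the whole transcript $x_{1:T}$.

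I do not expect a genuine obstacle here — the substance of the corollary is the reduction proved in \cref{thm:ub-stoch-OCO}, and this statement is its specialization to prediction from experts. The only points needing minor care are (i) when $T$ is not a power of two one replaces $\log T$ by $\lceil \log T\rceil$ and observes that the final, possibly partial, phase costs at most a constant factor more, and (ii) the DP-SCO rate of $\Aone$ is stated for $\diffp \le 1$, but for $\diffp > 1$ the privacy term is dominated by $\sqrt{T\log d}$ in any case.
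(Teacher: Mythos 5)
Your proposal is correct and follows essentially the same route as the paper: instantiate \cref{thm:ub-stoch-OCO} with $\Aone$, plug in $\Delta_n(\Aone) = O(\sqrt{\log d/n} + \log d/(n\diffp))$, and note that the statistical term is a geometric series dominated by its last term $\sqrt{T\log d}$ while the privacy term accumulates $O(\log d\,\log T/\diffp)$ over the $\log T$ phases. Your added remarks on parallel composition over the disjoint sample blocks and on non-power-of-two $T$ are consistent with (and slightly more explicit than) the paper's one-line treatment of these points.
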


\section{Lower bounds}
\label{sec:lower-bounds}

In this section, we prove new lower bounds for adaptive adversaries which show a separation from the non-adaptive case. In particular, we show that for $\diffp = 1/\sqrt{T}$, an adaptive adversary can force any \ed-DP algorithm to incur a linear regret $\Omega(T)$. Similarly, any $\diffp$-DP algorithm with $\diffp \le 1/10$ must incur linear regret against adaptive adversaries. 
On the other hand, the results of~\cref{sec:upper-bounds} show that a sub-linear regret is possible for both privacy regimes with oblivious adversaries. 

Our lower bound is based on finger-printing lower bound constructions~\citep{BunUV18} which are the basic technique for proving lower bounds in the offline setting. The idea is to design a reduction that uses a DP-OPE algorithm for estimating the signs of the mean of high-dimensional inputs; a problem which is known to be hard under differential privacy.  The following theorem summarizes our main lower bound for \ed-DP. We defer the proof to~\cref{sec:apdx-thm-lb-adaptive-adv}.

\begin{theorem}
\label{thm:lb-adaptive-adv}
    Let $T$ be sufficiently large and $d \ge 2 T$.
    Let $\diffp \le 1$ and $\delta \le 1/T^3$.
    If $\A$ is \ed-DP then there is an adaptive adversary such that
    \begin{equation*}
        \E\left[\sum_{t=1}^T \ell_t(x_t) - \min_{x \in [d]} \sum_{t=1}^T \ell_t(x)\right]
        \ge \Omega \left( \min \left(T, \frac{1}{(\diffp \log T)^2} \right) \right).
    \end{equation*}
\end{theorem}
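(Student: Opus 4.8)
## Proof Proposal

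The plan is to establish a reduction from a one-way marginals-style estimation problem to DP-OPE with an adaptive adversary, using a fingerprinting argument. Fix the dimension $d \ge 2T$ and think of the $T$ users as holding vectors $v_1, \dots, v_T \in \{-1,+1\}^{d}$ sampled i.i.d.\ from a product distribution $\mathcal{D}_q$ where each coordinate $j$ has bias $q_j$, and the biases $q_j$ are themselves drawn from (say) a suitable distribution on $[1/2 - c, 1/2 + c]$ (as in the standard fingerprinting lower bound of \citet{BunUV18}). The adversary at round $t$ will construct the loss function $\ell_t$ from the vector $v_t$ together with the algorithm's past choices $x_{1:t-1}$: concretely, $\ell_t$ should "reward" experts whose coordinate sign agrees with $v_t$ in a way that, when aggregated over $t$, turns the regret-minimizing expert into (a proxy for) the sign of the coordinate mean $\bar{v}_j = \frac{1}{T}\sum_t v_{t,j}$. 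Because $d \ge 2T$ we have enough experts to embed each of the roughly $d$ coordinates, and we can dedicate pairs of experts (one betting on $+$, one betting on $-$) so that low regret forces the algorithm to track the empirical sign on a constant fraction of coordinates.

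The key steps, in order, are: (i) \textbf{Define the adaptive loss sequence.} For a random coordinate $j^\star$ (or, better, for all coordinates simultaneously via a "hide one coordinate" averaging), set $\ell_t(x) = \tfrac12(1 - v_{t, c(x)} \cdot s(x))$ where $c(x)$ is the coordinate associated with expert $x$ and $s(x) \in \{\pm 1\}$ is its "guessed sign," so that expert $x$ accumulates loss $\tfrac{T}{2}(1 - s(x)\bar v_{c(x)})$; the best expert in hindsight thus has sign $\mathrm{sign}(\bar v_{c(x)})$ and the regret of $\A$ lower-bounds $\sum_t \tfrac12|\,v_{t,c(x_t)}\,|(1 - \mathbb{1}[s(x_t) = \mathrm{sign}(\bar v_{c(x_t)})])$-type quantities — in short, small regret $\Rightarrow$ the algorithm's choices are correlated with the coordinate means. (ii) \textbf{Extract an attack vector.} From the transcript $x_{1:T}$ define an estimator $a \in \{\pm 1\}^d$ (or a soft version $a \in [-1,1]^d$) of $\mathrm{sign}(\bar v)$; show that $o(T)$ regret implies $\mathbb{E}[\langle a, \bar v\rangle] \ge \Omega(T \cdot (\text{something}))$ — i.e.\ the algorithm's output is a good "accuracy" solution to one-way marginals. (iii) \textbf{Invoke the fingerprinting lemma.} The standard fingerprinting identity (Bun--Ullman--Vadhan / Steinke--Ullman) says that for any $(\diffp,\delta)$-DP mechanism whose output correlates with $\bar v$, summing the per-user score functions $\sum_t \mathbb{E}[\langle a, v_t - q\rangle]$ both lower-bounds the correlation and upper-bounds (via DP, using $\diffp \le 1$ and $\delta \le 1/T^3$) a quantity of order $O(d \cdot (\diffp + T\delta))$-ish per the usual manipulation; comparing the two forces $T \gtrsim \sqrt{d}/\diffp$ unless the accuracy is poor. (iv) \textbf{Translate back to regret and track the $\log T$.} Because we only needed $d \ge 2T$ coordinates (not $d \gg T^2$), the clean "$T \gtrsim \sqrt d/\diffp$" bound becomes instead a bound on how much accuracy is possible at $d = \Theta(T)$; carefully re-running the fingerprinting computation with this parameter regime, and accounting for the fact that the adversary must also be private against itself / the group-privacy blow-up over the interactive composition, yields that the regret is $\Omega(\min(T, 1/(\diffp \log T)^2))$, where the $\log T$ factor enters from either a union bound over rounds or from the need to rescale $\diffp$ across the $T$-fold adaptive interaction.

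I expect the main obstacle to be step (i)–(ii): making the reduction genuinely \emph{adaptive} in a way that both (a) is legal for an oblivious-to-the-construction DP algorithm — the adversary is allowed to see $x_{1:t-1}$, which is exactly what lets us "correct" the loss so that the regret comparator is pinned to $\mathrm{sign}(\bar v)$ rather than being spoofed by the algorithm refusing to commit — and (b) preserves the fingerprinting correlation structure, since the per-user vector $v_t$ must still enter $\ell_t$ in a coordinate-wise linear fashion for the fingerprinting score telescoping to work. A secondary subtlety is the appearance of $\log T$: one must decide whether it comes from a naive composition/union-bound step (splitting the $T$ rounds into $\log T$ scales, reminiscent of \cref{alg:stoch-adv}) or from a sharper argument, and get the $\min(T, \cdot)$ form right so that the bound is vacuous exactly when $\diffp \gtrsim 1/\sqrt{T}$ up to the $\log T$, matching the claimed separation. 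Everything else — the DP manipulation bounding per-user influence, the $\delta \le 1/T^3$ absorption, the conversion from "correlation with $\bar v$" to "regret lower bound" — is the routine part of a fingerprinting proof.
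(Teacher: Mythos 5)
Your overall flavor (reduce private sign/marginal estimation to DP-OPE via fingerprinting) matches the paper, but the concrete construction you give in step (i) is not adaptive, and that is a fatal gap, not a detail to be patched later. The loss $\ell_t(x)=\tfrac12(1-v_{t,c(x)}s(x))$ depends only on $v_t$, so your adversary is oblivious; but the paper's own upper bounds (\cref{sec:ub-obl-sd}, \cref{cor:sd-appr}) show that against any oblivious adversary an $(\diffp,\delta)$-DP algorithm achieves regret $\tilde O(\sqrt{T\log d}+T^{1/3}\log d/\diffp)$, so no oblivious embedding can force the claimed $\Omega(T)$ regret at $\diffp\approx 1/\sqrt T$ --- the adaptive/oblivious separation is exactly the content of the theorem. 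Moreover, with your construction a private algorithm can learn the sign of a \emph{single} good coordinate (spending privacy once) and then sit on it forever, achieving low regret while revealing almost nothing about the other coordinates, so low regret does not imply accurate one-way marginals. The missing idea, which is the heart of the paper's proof, is to use adaptivity to ``retire'' columns: the adversary maintains a sign vector $s$, and as soon as the algorithm plays an expert associated with column $j$ it records the implied sign in $s_j$ and from then on sets the losses of \emph{both} experts of column $j$ to $1$. This forces the algorithm either to pay $+1$ regret or to commit to the sign of a fresh column every round, so that (since the hard instance has a zero-loss expert among the many untouched consensus columns) regret $o(T)$ implies $s$ correctly recovers the signs of $\Omega(p)$ columns.

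Two further pieces of your plan need to be replaced by what the paper actually does. First, the hard instance is not i.i.d.\ data with random biases plus the BUV correlation identity; it is the worst-case padded Tardos matrix of \cref{thm:lb-fb-matrix}, with $n\approx\sqrt T/\log\sqrt T$ rows each reused $k\approx 2T/n\approx\sqrt T\log T$ times as the per-round user $i_t$. The contradiction is then obtained by \emph{group privacy}: if $\A$ is $(\diffp,\delta)$-DP with $\diffp\le 1/(200\sqrt T\log T)$ and $\delta\le 1/T^3$, the extracted sign vector $s$ is $(k\diffp,k\delta)$-DP, i.e.\ $(1,1/T^2)$-DP, contradicting \cref{thm:lb-fb-matrix}. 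This is the actual source of the $\log T$ in the bound --- the group-privacy blow-up over the $\approx\sqrt T\log T$ reuses of each row --- not a union bound over rounds, and your soft-correlation/fingerprinting-identity route at $d=\Theta(T)$ would not obviously survive the adaptive interaction without this worst-case, group-privacy formulation. Second, the $\min(T,1/(\diffp\log T)^2)$ form is obtained by a simple truncation: for larger $\diffp$ the adversary runs the hard instance only on the first $T_0$ rounds with $\diffp\approx 1/(\sqrt{T_0}\log T_0)$ and plays all-zero losses afterwards, giving regret $\Omega(T_0)=\Omega(1/(\diffp\log T_0)^2)$; this scaling step is absent from your proposal.
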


\begin{proof}(sketch)
    We illustrate the main ideas of the lower bound for $\diffp \le 1/(\sqrt{T} \log T)$, where we have to prove that the regret has to be linear in this case.
    We will reduce the problem of private sign estimation to DP-OPE with adaptive adversaries and use existing lower bounds for private sign estimation (see~\cref{thm:lb-fb-matrix}). In this problem, we are given an input matrix $X \in \{-1,+1\}^{n \times p}$ (each row $X_i$ is a user) and the goal is to estimate the sign of the columns for each consensus columns (e.g. constant column).
    
    To this end, given an algorithm $\A$ for DP-OPE and an input $X \in \{-1,+1\}^{n \times p}$, we have the following procedure for estimating the signs of the columns of $X$. We design an online experts problem that has $d = 2p$ experts where column $j \in [p]$ in $X$ will have two corresponding experts $2j$ and $2j+1$ (corresponding to the sign of column $j$). We design the loss functions in a way that will allow to estimate the signs of the columns based on the experts chosen by the algorithm. In particular, at round $t \in [T]$, we pick a random user $i \in [n]$ and define a loss function based on the user $X_i$. We will make $\ell_{t}(2j+1) = 0 $ and $\ell_{t}(2j+2) = 1 $ if the sign of the $j$'th column is $X_{ij} = -1$, and will set $\ell_{t}(2j+1) = 1 $ and $\ell_{t}(2j+2) = 0 $ if the sign is $X_{ij} = +1$. This implies that the algorithm will either estimate the correct sign of some column $j \in [p]$ or suffer $+1$ in regret. However, the algorithm can keep estimating the sign of the same column and still suffer low regret. In order to prevent this, once the algorithm has estimated the sign of a column $j$, we increase the loss of all experts corresponding to that column to $1$, hence preventing the algorithm from iteratively estimating the sign of the same column. Overall, this implies that a low-regret algorithm will be able to estimate the signs of most of the columns, which is a contradiction to lower bounds for private sign estimation.
    We formalize these ideas and complete the proof in~\cref{sec:apdx-thm-lb-adaptive-adv}.

\end{proof}

We also have the following lower bound for pure differential privacy. It shows that pure DP algorithms cannot learn against adaptive adversaries, that is, they must suffer linear regret for constant $\diffp$.
\begin{theorem}
\label{thm:lb-adaptive-adv-pure}
    Let $\diffp \le 1/10$ and $d \ge 2T$.
    If $\A$ is $\diffp$-DP then there is an adaptive adversary such that
    \begin{equation*}
        \E\left[\sum_{t=1}^T \ell_t(x_t) - \min_{x \in [d]} \sum_{t=1}^T \ell_t(x)\right]
        \ge \Omega \left(T \right).
    \end{equation*}
\end{theorem}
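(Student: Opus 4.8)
The plan is to adapt the reduction-based argument sketched for the approximate-DP lower bound (Theorem~\ref{thm:lb-adaptive-adv}), but to replace the fingerprinting/sign-estimation hard instance with a hard instance tailored to \emph{pure} DP, which does not need the dimension-scaling machinery and gives a cleaner constant-$\diffp$ barrier. Concretely, I would encode a "needle in a haystack'' problem: among the $d \ge 2T$ experts, a single distinguished expert $j^\star$ is "good'' and all others are "bad,'' and an adaptive adversary will punish the algorithm unless it quickly locates $j^\star$. The adversary maintains a set $S_t$ of experts it still considers candidates. At round $t$, it assigns loss $0$ to every expert in $S_t$ except a single probe expert (or a small probe set) which it chooses adaptively based on $x_1,\dots,x_{t-1}$, and loss $1$ to everything outside $S_t$; it sets $\ell_t(x_t)$ to be $1$ unless $x_t = j^\star$. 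The key point is that the adversary's choice of which expert to probe, and how to shrink $S_t$, depends on the algorithm's past plays — this is exactly what an oblivious adversary cannot do, so no contradiction with Section~\ref{sec:upper-bounds} arises.

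The core of the argument is a packing / indistinguishability step. Fix a family of $m$ candidate "good experts'' $j^\star \in J$ with $|J|$ large (say $|J| = d/2 \ge T$), and for each $j^\star$ let $\Adv_{j^\star}$ be the adaptive adversary above. Consider the two input data sequences that are identical except on one coordinate — this is where pure-DP's group-privacy amplification is used. By the pure-DP guarantee, changing a single loss function changes the distribution of the transcript by at most an $e^{\diffp}$ multiplicative factor; iterating over the (few) coordinates in which two "needle positions'' differ, the transcripts under $\Adv_{j^\star}$ and $\Adv_{j'}$ for $j^\star \ne j'$ remain within a bounded multiplicative factor $e^{c}$ for a small constant $c$ when $\diffp \le 1/10$. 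First I would make this precise: design the data so that $\Adv_{j^\star}$ and $\Adv_{j'}$ correspond to datasets at Hamming distance $O(1)$ (or handle the distance by a telescoping/group-privacy sum), so that for any event on the transcript, $\Pr_{j^\star}[\cdot] \le e^{O(\diffp)}\Pr_{j'}[\cdot]$.

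From indistinguishability I would conclude that the algorithm cannot, with probability bounded away from $1/|J|$ times a constant, have "committed'' to the correct needle, i.e.\ the fraction of rounds in which $x_t = j^\star$ is $o(T)$ in expectation — averaged over a uniformly random $j^\star \in J$, and hence for \emph{some} $j^\star \in J$ by averaging. Since the adversary charges loss $1$ in every round with $x_t \ne j^\star$, this forces $\sum_t \ell_t(x_t) \ge (1-o(1))T$. Finally I would check the benchmark: the adversary keeps $\ell_t(j^\star) = 0$ for all $t$ (the probe never hits $j^\star$ in the hard instance, or hits it only on a negligible set of rounds), so $\min_{x}\sum_t \ell_t(x) \le \sum_t \ell_t(j^\star) = o(T)$, giving regret $\Omega(T)$. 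I expect the main obstacle to be the second step: making the pure-DP indistinguishability argument go through for an \emph{adaptive} adversary, since the transcript distribution depends on the adversary and the adversaries $\Adv_{j^\star}$ differ from each other — one must phrase everything through Definition~\ref{def:DP} with a \emph{fixed} adversary map $\ell$ and encode the needle identity into the data $z_{1:T}$, then argue the relevant datasets are close in Hamming distance. Getting the accounting of this Hamming distance to be $O(1)$ (or otherwise controllable) while still ensuring the adversary can adaptively shrink the candidate set is the delicate part; I would likely handle it by letting the needle identity be encoded redundantly across many coordinates and using a potential/coupling argument, or alternatively by invoking the packing lower bound of the form used for pure-DP selection (e.g.\ an exponential-mechanism-optimality / strong-data-processing bound) as a black box if one is available from the cited literature.
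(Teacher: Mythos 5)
There is a genuine gap at the step you yourself flag as delicate, and it cannot be repaired in the way you suggest. In the paper's formalization (\cref{sec:pre}, \cref{def:DP}), the adversary is a \emph{fixed} map with $\ell_t(\cdot)=\ell(\cdot;z_t,x_{1:t-1})$: the round-$t$ loss may depend on the data only through the single point $z_t$. For your adversary to charge loss $1$ to every $x_t\neq j^\star$ and loss $0$ to $j^\star$ in (essentially) every round, the identity of the needle must therefore be present in (essentially) every $z_t$ — it cannot be recovered at round $t$ from $x_{1:t-1}$, since the algorithm's outputs are exactly what DP protects. Consequently the datasets corresponding to two different needles are at Hamming distance $\Theta(T)$, not $O(1)$, and the packing step gives only $\Pr_{j'}[\cdot]\ge e^{-\diffp T}\Pr_{j^\star}[\cdot]$, which is vacuous against $|J|\le d=O(T)$ candidates at constant $\diffp$ (one would need $d\ge e^{\Omega(T)}$). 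Conversely, if you force the Hamming distance down to $O(\log d/\diffp)$ so that packing bites, then only $O(\log T)$ rounds' losses can depend on the needle at all, and in the remaining rounds the adversary has no way to single out $j^\star$ for reward while punishing the algorithm's play; redundant encoding plus a coupling argument does not change this, because the obstruction is not how the needle is encoded but that it must be readable from $z_t$ in every round you want it to matter. Note also that the "single persistent zero-loss expert" structure is essentially an oblivious-type instance, and \cref{cor:pure} shows such instances admit $O(\sqrt{T}\log d/\diffp)$ regret, so no construction whose only adaptivity is pruning a candidate set around a fixed needle can yield an $\Omega(T)$ bound at $\diffp=1/10$.

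The paper avoids this by making the secret \emph{fresh in every round} rather than a single $\log d$-bit needle: it reuses the adaptive reduction from \cref{thm:lb-adaptive-adv}, where each round's loss encodes the sign bit of a new column of a hard matrix (via the fresh data point $z_t=X_{i_t}$), and the adversary adaptively "locks" a column once its sign has been guessed, so a low-regret algorithm must output $\Omega(T)$ correct bits about the dataset. Because each row of the matrix is used in only boundedly many rounds, group privacy degrades $\diffp\le 1/10$ only by a constant factor, and the contradiction comes from the offline pure-DP sign-estimation lower bound of Steinke and Ullman (\cref{thm:pure-sign-est}), which is itself a packing argument but applied with $n$ users and $\Theta(n)$ consensus columns. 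So while your instinct to use a packing/group-privacy argument for pure DP is the right one, it has to be routed through an offline estimation problem with many independent secrets and bounded per-user reuse, not through a single needle; as proposed, the reduction does not produce a valid hard instance in the model of \cref{def:DP} for $d=O(T)$.
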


We can also extend the previous lower bounds to larger values of $\diffp$ (see proof in~\cref{sec:thm-lb-large-peps}).
\begin{theorem}
\label{thm:lb-large-peps}
    Let $1 \le k \le O(p^{1-\rho})$ for $0 < \rho < 1$, $d = 2^k \binom{p}{k} = 2^{\Theta(k \log p)}$, $T = p/k$, and $\diffp \le \frac{\sqrt{k/T}}{200 \rho \log(T)}$ where $p$ is sufficiently large. 
    If $\A$ is \ed-DP with $\delta \le 1/T^3$, then there is an adaptive adversary such that
    \begin{equation*}
       \E\left[ \sum_{t=1}^T \ell_t(x_t) - \min_{x \in [d]} \sum_{t=1}^T \ell_t(x) \right]
        \ge \Omega \left(\frac{\sqrt{T \log d}}{\diffp \log^{3/2} T } \right).
    \end{equation*}
\end{theorem}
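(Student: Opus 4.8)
The plan is to extend the fingerprinting reduction behind \cref{thm:lb-adaptive-adv} so that each round of the experts game reveals the signs of $k$ coordinates at once rather than a single one; this ``batching'' is precisely what produces the $\sqrt{\log d}$ factor, since for the expert set below $\log d=\log\!\big(2^k\binom{p}{k}\big)=\Theta\big(k\log(p/k)\big)=\Theta(k\log T)$, using $T=p/k$. As in \cref{thm:lb-adaptive-adv}, a low-regret \ed-DP algorithm $\A$ will be compiled into a differentially private procedure that recovers the signs of most columns of a hard matrix $X\in\{-1,+1\}^{n\times p}$, which is ruled out by the offline fingerprinting lower bound \cref{thm:lb-fb-matrix}; the extra parameter $k$ is tuned so that this procedure uses $n=T$ users on $p=Tk$ attributes.

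Concretely, index the $d$ experts by pairs $(S,\sigma)$ with $S\subseteq[p]$, $|S|=k$, and $\sigma\in\{-1,+1\}^{S}$. Given $X$, let round $t$ ``own'' user $X_t$ (so $n=T$). The adaptive adversary keeps a set $U_t\subseteq[p]$ of coordinates not yet estimated ($U_1=[p]$) and a partial estimate $\hat s$. After $\A$ commits to $x_t=(S_t,\sigma_t)$, the adversary reveals $\ell_t(S,\sigma)=1$ whenever $S\not\subseteq U_t$ and $\ell_t(S,\sigma)=\frac1k\big|\{\,j\in S:\sigma_j\ne X_{t,j}\,\}\big|$ whenever $S\subseteq U_t$; if moreover $S_t\subseteq U_t$, it records $\hat s_j=(\sigma_t)_j$ for each $j\in S_t$ and removes $S_t$ from $U_t$, thereby ``freezing'' those coordinates so that re-selecting them can only increase cost. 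This $\ell$ is a fixed function of $X$ and of $\A$'s past outputs, hence a legitimate adaptive adversary for every input matrix; since two neighboring matrices differ in one row, i.e.\ in one round's loss function, the adaptive \ed-DP guarantee of $\A$ (\cref{def:DP}) makes the induced map $X\mapsto\hat s$ \ed-DP. The accounting is the crux: any expert whose subset $S$ is never selected stays unfrozen, so choosing its signs to agree with the column majorities of $X$ shows the best fixed expert has expected cost at most $\gamma T$, where $\gamma<\frac12$ is the constant probability that a uniformly random row of the hard instance disagrees with the majority on a fixed column; meanwhile a round costs $\A$ exactly $1$ when $S_t\not\subseteq U_t$ and $\frac1k$ times its number of within-block sign mistakes otherwise. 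Writing $m$ for the number of non-fresh rounds and $w$ for the number of frozen-but-misestimated coordinates, these facts give $\E[\mathrm{regret}]\ge c\,(km+w)/k$ for a constant $c$, while $\hat s$ is incorrect on exactly $km+w$ coordinates (using $p=kT$); hence a regret bound $R$ forces the expected number of incorrectly recovered signs to be $O(kR)$.

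Finally, \cref{thm:lb-fb-matrix} states that for the hard distribution over $X\in\{-1,+1\}^{n\times p}$ any \ed-DP estimator with $\delta\le 1/T^3$ recovers in expectation at most $\widetilde O\big((n\diffp)^2\big)$ signs correctly, hence errs on at least $p-\widetilde O\big((T\diffp)^2\big)$ of them when $n=T$, where $\widetilde O$ hides $\mathrm{polylog}(T,1/\delta)$ factors — which, combined with $\log d=\Theta(k\log T)$, is the origin of the $\log^{3/2}T$ in the final bound. Combining this with the accounting above and $p=Tk$ gives $O(kR)\ge Tk-\widetilde O\big((T\diffp)^2\big)$, and in the assumed range $\diffp\le\sqrt{k/T}/(200\rho\log T)$ the subtracted term is a controlled fraction of $Tk$; solving for $R$ and substituting $\log d=\Theta(k\log T)$ yields $R\ge\Omega\big(\sqrt{Tk}/(\diffp\log T)\big)=\Omega\big(\sqrt{T\log d}/(\diffp\log^{3/2}T)\big)$, with \cref{thm:lb-adaptive-adv} handling the remaining, smaller, values of $\diffp$. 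I expect the main difficulty to lie in the interaction between the freezing adversary and the accounting: one has to verify both that freezing genuinely prevents a clever algorithm from keeping its regret small while estimating only a handful of coordinates, and that the reduction stays \ed-DP — which is exactly why every round must be tied to its own user and $n$ must equal $\Theta(T)$ — and then to track all constants (including why the hypothesis carries the factor $200\rho$) carefully enough to reach the stated bound.
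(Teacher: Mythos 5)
Your reduction itself --- experts indexed by pairs $(S,\sigma)$ of a size-$k$ column set and a sign pattern, freezing a block once the algorithm selects it, a zero-(or near-zero-)loss comparator supported on never-selected consensus columns, and the accounting ``regret $\gtrsim$ (number of misestimated columns)$/k$'' --- is essentially the paper's construction for \cref{thm:lb-large-peps}. The gap is in the final step. You assign a fresh user to every round ($n=T$, $p=Tk$) and then claim that \cref{thm:lb-fb-matrix} bounds the number of correctly recovered signs by $\widetilde O\bigl((n\diffp)^2\bigr)$. That theorem says nothing of the sort: it is a qualitative statement at privacy level $\eps=1$ for one specific padded-Tardos matrix whose shape is $p=1000m^2$, $n=m/\log m$ (so $p\approx n^2$ up to logs), asserting that any algorithm recovering most consensus signs cannot be $(1,n^{-1.1})$-DP. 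It provides no $(n\diffp)^2$ trade-off, and your instance shape $p=nk$ with $n=T$ does not even match the matrix it supplies. A quantitative fingerprinting bound of the kind you invoke does exist in the literature, but it is neither stated nor proved in this paper, so as written your argument does not close.

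The paper bridges this differently, and this is exactly the idea your proposal is missing (and explicitly argues against when you insist that ``every round must be tied to its own user''): it keeps $n$ as dictated by \cref{thm:lb-fb-matrix}, so $n\ll T$, runs $T=0.9p/k$ rounds while reusing each user in at most $2T/n\le 200\sqrt{p}\log(p)/k$ rounds, and then applies \emph{group privacy} with this group size to the $(\diffp,\delta)$-DP guarantee of $\A$. The hypothesis $\diffp\le \sqrt{k/T}/(200\rho\log T)$ is chosen precisely so that (group size)$\times\diffp\le 1$ and (group size)$\times\delta\le 1/T^2$ --- this is where the factor $200\rho$ and the threshold come from, via $\sqrt{k/T}=k/\sqrt{p}$ and $\log T\ge\rho\log p$ --- so the induced sign estimator is $(1,1/T^2)$-DP, directly contradicting \cref{thm:lb-fb-matrix}; the stated $\sqrt{T\log d}/(\diffp\log^{3/2}T)$ form is then just a rewriting of the resulting $\Omega(T)$ bound at the threshold, not an artifact of hidden polylogs in a fingerprinting trade-off. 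A secondary, more minor point: the paper's hard instance has all-but-$0.001p$ consensus columns, so the comparator's loss is $0$ and sign errors translate to regret at full strength; your picture of a constant disagreement rate $\gamma<1/2$ corresponds to a different hard distribution and would need extra care in the regret-to-errors accounting, though that part is fixable.
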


Finally, we note that for stochastic adversaries, existing lower bounds for DP-SCO immediately imply lower bounds for the online setting using online-to-batch transformations. As there is a lower bound of $\log(d)/T\diffp$ for private selection~\citep{SteinkeUll17b}, this implies a lower bound on the (normalized) excess loss for DP-SCO with linear functions in $\ell_1$-geometry (as one can reduce private selection to this problem; \citealp{AsiFeKoTa21}). This implies a lower bound of $\log(d)/\diffp$ for DP-OPE. 



\section{Implications for DP-OCO in $\ell_2$-geometry}
\label{sec:oco-imp}
In this section, we derive several implications of our techniques for \emph{differentially private online convex optimization} (DP-OCO) in $\ell_2$-geometry. In this setting, the algorithm chooses $x_t \in \mc{X}$ where $\mc{X} = \{x \in \R^d: \ltwo{x} \le D\}$ and the adversary responds with loss functions $\ell_t: \mc{X} \to \R^d$ that are convex and $L$-Lipschitz. Building on our techniques for DP-OPE, we propose new algorithms that improve over the best existing regret bounds for DP-OCO~\citep{KairouzMcSoShThXu21} which achieve $d^{1/4} \sqrt{T/\diffp}$ for stochastic and adaptive adversaries. Our algorithms obtain (up to logarithmic factors) near-optimal regret $\sqrt{T} + \sqrt{d}/\diffp$ for stochastic adversaries, and $\sqrt{Td} + T^{1/3}d/\diffp$ for adaptive adversaries.

\subsection{Oblivious adversaries}
\label{sec:oco-obl}

Using our private shrinking dartboard algorithm, in this section we develop algorithms that improve the regret for oblivious adversaries. Our algorithms construct a covering of the parameter space $\mc{X}$ then apply our private shrinking dartboard algorithm where the experts are the elements of the cover. By optimizing the size of the cover to balance the error from the approximation error and the error due to the number of experts, we obtain the following regret for DP-OCO in $\ell_2$-geometry. 
We defer the proof to~\cref{sec:apdx-thm-oco-imp}.
\begin{theorem}
\label{sec:thm-oco-imp}
    Let $\mc{X} = \{x \in \R^d: \ltwo{x} \le D\}$ and $\ell_1,\dots,\ell_T : \mc{X} \to \R$ be convex and $L$-Lipschitz functions chosen by an oblivious adversary. 
    There is an
     \ed-DP that has regret
    \begin{equation*}
     \E\left[ \sum_{t=1}^T \ell_t(x_t) - \min_{x \in \mc{X}} \sum_{t=1}^T \ell_t(x) \right]
        \le LD \cdot O \left( \sqrt{T d \log T } + \frac{T^{1/3} d \log^{1/3}(1/\delta)  \log T}{\diffp} \right)
        .
    \end{equation*}
\end{theorem}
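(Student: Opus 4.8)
The plan is to reduce DP-OCO over the Euclidean ball to DP-OPE and then invoke the private shrinking dartboard algorithm of \cref{alg:SD} together with \cref{cor:sd-appr}. First I would fix, in advance and independently of the data, a maximal $\gamma$-separated subset $\mc{N} \subseteq \mc{X}$; by the usual volumetric packing bound $N := |\mc{N}| \le (1 + 2D/\gamma)^d$, and by maximality every point of $\mc{X}$ lies within Euclidean distance $\gamma$ of some element of $\mc{N}$. The elements of $\mc{N}$ play the role of the $N$ ``experts''. Since $\mc{N}$ is data-independent, an oblivious adversary for the OCO instance induces an oblivious adversary for the OPE instance over $\mc{N}$, so \cref{cor:sd-appr} is applicable with $d$ replaced by $N$.

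Next I would normalize the losses into $[0,1]$: at round $t$ set $\hat\ell_t(u) = \frac{1}{2LD}\big(\ell_t(u) - \ell_t(0)\big) + \frac{1}{2}$ for $u \in \mc{N}$, which lies in $[0,1]$ because $|\ell_t(u) - \ell_t(0)| \le L\ltwo{u} \le LD$. This is a per-round affine reparametrization of $\ell_t$, so (a) neighbouring loss sequences map to neighbouring sequences of experts' losses, hence the privacy guarantee of \cref{alg:SD} transfers verbatim; and (b) the per-round additive shifts $\ell_t(0)$ cancel in the regret, so the regret measured in the $\hat\ell$ scale equals $(2LD)^{-1}$ times the regret over $\mc{N}$ in the original $\ell$ scale. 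Running \cref{alg:SD} with the parameter choice of \cref{cor:sd-appr} then bounds the regret against $\mc{N}$, in original units, by $2LD \cdot O\big(\sqrt{T\ln N} + T^{1/3}\log^{1/3}(1/\delta)\ln N/\diffp\big)$.

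Finally I would add the discretization error: for the optimal $x^\star \in \mc{X}$, pick $u^\star \in \mc{N}$ with $\ltwo{x^\star - u^\star} \le \gamma$, whence $\sum_t \ell_t(u^\star) - \sum_t \ell_t(x^\star) \le L\gamma T$ by Lipschitzness. Choosing $\gamma = D/T$ makes this error at most $LD$ (negligible against the other terms) and yields $\ln N \le d\ln(1+2T) = O(d\log T)$; substituting into the display above gives the claimed bound. I do not expect a genuine obstacle here: the only points needing care are that the reduction stays faithful for an oblivious adversary (it does, because $\mc{N}$ is fixed beforehand) and that the mild side conditions of \cref{cor:sd-appr} ($\diffp \le 1$ and $T \ge \Omega(\log(1/\delta))$) hold, which may be assumed without loss of generality. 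I would also note that the algorithm maintains weights over the exponentially large set $\mc{N}$ and is therefore not computationally efficient — consistent with the theorem asserting only existence.
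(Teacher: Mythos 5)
Your proposal is correct and follows essentially the same route as the paper's proof: fix a data-independent cover of the ball whose elements serve as the experts, run the private shrinking dartboard algorithm and invoke \cref{cor:sd-appr} to bound the regret against the cover by $LD\cdot O\bigl(\sqrt{T\ln N}+T^{1/3}\log^{1/3}(1/\delta)\ln N/\diffp\bigr)$, and add the Lipschitz discretization error $L\gamma T$ with $\gamma\approx D/T$ so that $\ln N=O(d\log T)$. The only addition is your explicit affine rescaling of the losses into $[0,1]$, a bookkeeping step the paper's proof leaves implicit.
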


In the high-privacy regime, this result can improve over the previous work~\citep{KairouzMcSoShThXu21} which has regret $\sqrt{T} d^{1/4}/\sqrt{\diffp}$. For example, if $d=1$ and $\diffp = T^{-1/4}$, then our regret is roughly $T^{7/12}$ while their regret is $T^{5/8}$.

\subsection{Stochastic adversaries}
\label{sec:oco-stoch}
For stochastic adversaries, we use the reduction in~\cref{sec:ub-stoch} (\cref{alg:stoch-adv}) with optimal algorithms for DP-SCO in $\ell_2$-geometry to obtain optimal regret bounds.
More precisely, we use an optimal \ed-DP-SCO algorithm from~\cite{FeldmanKoTa20}, which we call $\Aopt$. As this algorithm has excess loss $\Delta_n = LD \cdot O(1/\sqrt{n} + \sqrt{d}/n\diffp)$, the following result follows immediately from~\cref{thm:ub-stoch-OCO}. We defer the proof to~\cref{sec:apdx-cor-DP-OCO}. 
We also note that we can also obtain regret bounds for pure $\diffp$-DP using existing (pure) DP algorithms for DP-SCO~\citep{AsiLeDu21} with our reduction. 
 
\begin{corollary}[DP-OCO in $\ell_2$-geometry]
\label{cor:DP-OCO}
    Let $\mc{X} = \{ x \in \R^d: \ltwo{x} \le D\}$
    and $\ell_1,\dots,\ell_T : \mc{X} \to \R$ be convex and $L$-Lipschitz functions chosen by a stochastic adversary, $\ell_i \simiid P$.
    Then
    \cref{alg:stoch-adv} using $\Aopt$ is \ed-DP and has regret
    \begin{equation*}
     \E\left[ \sum_{t=1}^T \ell_t(x_t) - \min_{x \in [d]} \sum_{t=1}^T \ell_t(x) \right]
        \le LD \cdot O \left(\sqrt{T} + \frac{\sqrt{d} \log T}{\diffp} \right)
    \end{equation*}   
\end{corollary}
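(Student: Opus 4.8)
The plan is to invoke \cref{thm:ub-stoch-OCO} with the instantiation $\A=\Aopt$, the optimal \ed-DP-SCO algorithm over the $\ell_2$-ball of \citet{FeldmanKoTa20}, and then evaluate the resulting geometric sum. For privacy, note that \cref{alg:stoch-adv} invokes $\A$ only at the time-steps $t=2^\ell$, each time on a \emph{disjoint} block of samples $\ell_{2^{\ell-1}},\dots,\ell_{2^\ell-1}$; changing a single loss function $\ell_i$ therefore affects the output of exactly one of these invocations, which is \ed-DP, while the other phases are independent of $\ell_i$ and every decision $x_t$ is a post-processing of these invocations. Hence the full interaction is \ed-DP --- this is precisely the privacy conclusion of \cref{thm:ub-stoch-OCO}.

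For the regret, recall that $\Aopt$ has excess population loss $\Delta_n(\Aopt)=LD\cdot O(1/\sqrt n + \sqrt d/(n\diffp))$ on $L$-Lipschitz convex losses over a ball of radius $D$ \citep{FeldmanKoTa20}. Plugging this into the bound of \cref{thm:ub-stoch-OCO},
\begin{equation*}
 \E\!\left[\sum_{t=1}^T \ell_t(x_t) - \min_{x\in\mc X}\sum_{t=1}^T \ell_t(x)\right]
 \;\le\; \sum_{i=1}^{\log T} 2^i\,\Delta_{2^i}(\Aopt)
 \;=\; LD\cdot O\!\left(\sum_{i=1}^{\log T}\Bigl(2^{i/2} + \frac{\sqrt d}{\diffp}\Bigr)\right).
\end{equation*}
The first inner sum is a geometric series with ratio $\sqrt 2$ and is thus dominated by its last term, giving $\sum_{i=1}^{\log T} 2^{i/2}=O(\sqrt T)$; the second contributes $\sqrt d\,\log T/\diffp$ since there are $\log T$ phases. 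Summing the two terms yields the claimed bound $LD\cdot O(\sqrt T + \sqrt d\,\log T/\diffp)$.

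I do not expect a genuine obstacle here: the statement is a direct corollary of \cref{thm:ub-stoch-OCO}. The only points requiring a little care are (i) that using $2^{i-1}$ rather than $2^i$ samples in phase $i$ only changes constants, since $\Delta_{2^{i-1}}(\Aopt)\le 2\,\Delta_{2^i}(\Aopt)$, so the per-phase contributions remain $O(2^i\Delta_{2^i}(\Aopt))$; and (ii) that the geometric sum of the statistical term $1/\sqrt n$ collapses to $O(\sqrt T)$ rather than incurring an extra $\log T$ factor, so that only the privacy term carries the $\log T$. The analogous pure-$\diffp$-DP statement alluded to in the text would follow identically by instead substituting the pure-DP-SCO algorithm of \citet{AsiLeDu21}.
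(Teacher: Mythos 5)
Your proposal is correct and follows essentially the same route as the paper: invoke \cref{thm:ub-stoch-OCO} with $\A=\Aopt$, substitute the excess-loss bound $\Delta_n(\Aopt)=LD\cdot O(1/\sqrt{n}+\sqrt{d}/(n\diffp))$, and evaluate the geometric sum so that the statistical term gives $O(\sqrt{T})$ while the privacy term accumulates the $\log T$ factor. The extra remarks on disjoint sample blocks and the $2^{i-1}$-versus-$2^i$ sample count are fine but already subsumed by \cref{thm:ub-stoch-OCO}.
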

This regret is near-optimal up to logarithmic factors since we have the lower bound $\sqrt{T} + \sqrt{d}/\diffp$ for the offline version of this problem (DP-SCO in $\ell_2$-geometry) where all of the samples are given in advance~\citep{BassilySmTh14,BassilyFeTaTh19}.

\subsection*{Acknowledgements}

This work has received support from the Israeli Science Foundation (ISF) grant no.~2549/19 and the Len Blavatnik and the Blavatnik Family foundation.

\printbibliography

\appendix

\section{Proofs for~\cref{sec:ub-obl-sd}}
\label{sec:apdx-ub-obl}

\subsection{Proof of~\cref{thm:ub-priv-DS}}
\label{sec:proof-thm-ub-DS}
We build on the following lemma.
\begin{lemma}
\label{lemma:DS-marg-dist}
    Let $\hat P_t$ be the marginal distribution of $x_t$ of~\cref{alg:SD}. Then 
    \begin{equation*}
        \norm{\hat P_t - P^t}_{TV} \le e^{-Tp/3}.
    \end{equation*}
\end{lemma}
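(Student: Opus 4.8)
The plan is to compare Algorithm~\ref{alg:SD} with its \emph{unconstrained variant}---the identical procedure but with the switching budget removed, so that a fresh sample from $P^t$ is drawn on every round with $z_t=0$---and to establish two facts: (i) the unconstrained variant has marginal distribution exactly $P^t$ at each round $t$; and (ii) under a coupling on shared randomness, Algorithm~\ref{alg:SD} and the unconstrained variant produce identical trajectories except on the event that the unconstrained variant switches experts at least $K$ times. Fact~(i) says the unconstrained variant is exactly multiplicative weights in disguise; fact~(ii) then gives $\norm{\hat P_t - P^t}_{TV} \le \Pr[N_T \ge K]$, where $N_T$ is the number of switches of the unconstrained variant over the $T$ rounds, and a Chernoff tail bound on $N_T$ closes the argument.

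\textbf{Step 1 (exact marginal of the unconstrained variant).} I would show $x_t \sim P^t$ by induction on $t$; the base case $t=1$ is immediate. For the step, abbreviate $W^t = \sum_i w^t_i$ and note $(1-\eta)^{\ell_{t-1}(j)} = w^t_j/w^{t-1}_j$. Conditioned on $x_{t-1}=j$, the variant stays with probability $\Pr[z_t=1] = (1-p)\,w^t_j/w^{t-1}_j$ and otherwise redraws from $P^t$, so, using the inductive hypothesis $\Pr[x_{t-1}=j]=w^{t-1}_j/W^{t-1}$ and $\sum_j w^t_j = W^t$,
\begin{equation*}
\Pr[x_t=i] = (1-p)\frac{w^t_i}{W^{t-1}} + p^t_i\bigl(1 - (1-p)\tfrac{W^t}{W^{t-1}}\bigr) = \frac{w^t_i}{W^t} = p^t_i ,
\end{equation*}
the cancellation being between the ``stay'' contribution $(1-p)w^t_i/W^{t-1}$ and the matching term $p^t_i(1-p)W^t/W^{t-1}$ inside the ``switch'' probability. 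This is precisely the marginal-preservation property of the shrinking dartboard of~\citep{GeulenVoWi10}; the added forced-switch step (the $\mathsf{Ber}(1-p)$ draw) does not disturb it because a forced switch also resamples from $P^t$.

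\textbf{Steps 2--3 (coupling and tail bound).} Run Algorithm~\ref{alg:SD} and its unconstrained variant on the same random bits. They agree on every round up to the first on which Algorithm~\ref{alg:SD} would exceed its budget, i.e.\ until the unconstrained variant has accumulated $K$ switches; hence on $\{N_T < K\}$ the trajectories coincide, and the coupling characterization of total variation together with Step~1 gives $\norm{\hat P_t - P^t}_{TV} \le \Pr[N_T \ge K]$. For the tail: on any round the unconstrained variant switches with conditional probability $q_t = 1-(1-p)(1-\eta)^{\ell_{t-1}(x_{t-1})} \le 1-(1-p)(1-\eta) \le p+\eta \le 2p$, where $(1-\eta)^{\ell}\ge 1-\eta$ for $\ell\in[0,1]$ and $\eta\le p$ (which holds in all our instantiations, where $\eta=\Theta(p\diffp)$ and $\diffp\le 1$). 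Since the per-round switch indicators are Bernoulli with conditional mean at most $2p$ given the history, a standard multiplicative Chernoff bound for sums of conditionally bounded Bernoulli variables gives, with $K=4Tp$, $\Pr[N_T\ge 4Tp]\le (e/4)^{2Tp}\le e^{-Tp/3}$, which combined with Step~2 is the claim.

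\textbf{Main obstacle.} The substantive point is Step~1: verifying that the telescoping of the weights $w^t$ makes the marginal land \emph{exactly} on $P^t$ despite the data-dependent acceptance probability $w^t_{x_{t-1}}/w^{t-1}_{x_{t-1}}$. The coupling and the Chernoff tail are routine, the only mild care being that in Step~3 the Chernoff bound is applied to a sum of conditionally---not independently---bounded Bernoulli variables (so one uses the bound $\mathbb{E}[e^{\lambda N_T}]\le\exp((e^{\lambda}-1)\sum_t q_t)$ obtained by iterating conditional expectations).
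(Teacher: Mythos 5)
Your proposal is correct and follows essentially the same route as the paper: compare against the unconstrained (budget-free) variant, verify by induction that its marginal is exactly $P^t$ via the same telescoping cancellation, and bound the total variation by the probability that the number of switch events exceeds $K=4Tp$ via a Chernoff-type tail. The only cosmetic differences are that you phrase the comparison as an explicit coupling (the paper conditions on the event $k_t<K$ and notes the conditional laws agree) and you handle the dependence of the switch indicators via a conditional MGF bound where the paper uses stochastic domination by i.i.d.\ $\mathsf{Ber}(p+(1-p)\eta)$ variables; both yield the same bound.
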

\begin{proof}
    Let $k_t$ be the value of $k$ at iteration $t$. We roughly show that if $k_t < K$ then $P_t = \hat P_t$. As $P(k_t>K)$ is very small, this will prove the claim. Recall that $k_t = \sum_{i \le t} \indic{z_i = 0}$. Note that $P(z_t = 0) \le p + (1-p)\eta \le 2p$. Therefore, letting $y_t \sim \mathsf{Ber}(p + (1-p)\eta)$ we have
    \begin{align*}
    P(k_t > K)
        & \le P(k_T > K) \\
        & = P(\sum_{i=1}^T \indic{z_i = 0}> K) \\
        & \le P(\sum_{i=1}^T \indic{y_i = 0}> K) \\
        & \le e^{-Tp/3},
    \end{align*}
    where the last inequality follows from a Chernoff bound~(\cref{lemma:chernoff}).
    
    Now we proceed to show that $\hat P_t$ and $P_t$ are close. To this end, we first define $Q_t$ to be the marginal distribution of $x_t$ in~\cref{alg:SD} when $K=T+1$ (that is, no limit on switching). We prove by induction that $Q_t = P^t$. The base case for $t=1$ is trivial. Assuming correctness for $t$, we have that for $x \in [d]$
    \begin{align*}
    Q_t(x)  
        & = p p_x^t + (1-p) \frac{w^t_{x}}{w^{t-1}_{x}} Q_{t-1}(x) + (1-p) p_x^t \sum_{x'=1}^d Q_{t-1}(x') (1 - \frac{w^t_{x'}}{w^{t-1}_{x'}}) \\
        & = p p_x^t + (1-p) \frac{w^t_{x}}{w^{t-1}_{x}} \frac{w_x^{t-1}}{W^{t-1}} + (1-p) \frac{w^t_x}{W^t} \sum_{x'=1}^d \frac{w_{x'}^{t-1}}{W^{t-1}}  \frac{w^{t-1}_{x'}-w^t_{x'}}{w^{t-1}_{x'}} \\
        & = p p_x^t + (1-p) \left( \frac{w^t_{x}}{W^{t-1}} +  \frac{w^t_x}{W^t} \frac{W^{t-1} - W^t}{W^{t-1}} \right) \\
        & = p_x^t.
    \end{align*}
    Now consider $\hat P$. Let $Q_t^0$ and $Q_t^1$ be the conditional distribution of $Q_t$ given $k_t<K$ or $k_t \ge K$, respectively.  Moreover, let $\hat P_t^0$ and $\hat P_t^1$ be the conditional distribution of $\hat P_t$ given $k_t<K$ or $k_t \ge K$, respectively. Note that $Q_t(x) =  P(k_t<K) Q_t^0 +  P(k_t<K) Q_t^1$ and that $\hat P_t(x) =  P(k_t<K) \hat P_t^0 +  P(k_t<K) \hat P_t^1$. Noting that $P_t^0 = Q^t_0$, we have
    \begin{align*}
    \norm{\hat P_t - P^t}_{TV}
        & = \norm{\hat P_t - Q^t}_{TV} \\
        & = \norm{P(k_t<K)(\hat P_t^0 - Q_t^0) + P(k_t>K)(\hat P_t^1 - Q_t^1) }_{TV} \\
        & \le  P(k_t<K) \norm{\hat P_t^0 - Q_t^0}  + P(k_t>K) \norm{\hat P_t^1 - Q_t^1}_{TV} \\
        & \le  P(k_t>K).
    \end{align*}
\end{proof}

    

\begin{proof}
    First, we begin by analyzing the regret. \cref{lemma:DS-marg-dist} shows that $\hat P_t$ the marginal distribution of $x_t$is the same as that of the (non-private) shrinking dartboard algorithm $P_t$, therefore Theorem 3 of \citet{GeulenVoWi10} shows that for $\eta \le 1/2$
    \begin{align*}
     \E_{x_t \sim \hat P_t}\left[ \sum_{t=1}^T \ell_t(x_t)\right]
        & = \E_{x_t \sim P^t}\left[ \sum_{t=1}^T \ell_t(x_t)\right] 
         +  \E_{x_t \sim \hat P_t}\left[ \sum_{t=1}^T \ell_t(x_t)\right] -  \E_{x_t \sim P^t}\left[ \sum_{t=1}^T \ell_t(x_t)\right] \\
        & \le E_{x_t \sim P^t}\left[ \sum_{t=1}^T \ell_t(x_t)\right]  + 2 T \norm{\hat P_t - P^t}_{TV} \\
        & \le (1+ \eta) \min_{x \in [d]} \sum_{t=1}^T \ell_t(x) + \frac{\ln d}{\eta} + 2 T e^{-Tp/3} \\
        & \le  \min_{x \in [d]} \sum_{t=1}^T \ell_t(x) + \eta T
        + \frac{\ln d}{\eta} + 2 T e^{-Tp/3}. 
    \end{align*}

    Let us now analyze privacy. Assume we have two neighboring sequences that differ at time-step $t_1$. 
    Let $Z_t$ and $X_t$ denote the random variables for $z_t$ and $x_t$ in the algorithm when run for the first sequence and let $Y_t = 1 - Z_t$. Similarly, let $Z'_t$, $Y'_t$, and $X'_t$ denote the same for the neighboring sequence. We consider the pairs $W_t = (X_t,Z_{t+1})$ (where $X_0 = 0$) and prove that  $W_t$  given $\{ W_\ell \}_{\ell=0}^{t-1}$ and $W'_t$  given $\{ W'_\ell \}_{\ell=0}^{t-1}$  are $\diffp_t$-indistinguishable where 
    \begin{equation*}
        \diffp_t = 
        \begin{cases}
             0  & \text{if } t < t_1 \\
             \eta/p & \text{if } t = t_1 \\
             \indic{\sum_{\ell=1}^{t-1} Y_\ell < K} 4 Y_t \eta &\text{if } t > t_1
        \end{cases}
    \end{equation*}
    The result then follows from advanced composition~\citep{DworkRo14}: note that $Y_t \in \{0,1\}$ therefore we have that the final privacy parameter is 
    \begin{align*}
    \diffp_f 
        & \le \frac{3}{2} \sum_{t=1}^T \diffp_t^2 + \sqrt{6 \sum_{t=1}^T \diffp_t^2 \log(1/\delta) } \\
        & \le \frac{3}{2} (\frac{\eta^2}{p^2} + 16 K \eta^2) + \sqrt{6(\frac{\eta^2}{p^2} + 16K \eta^2)\log(1/\delta)  } \\
        & \le  \frac{5\eta}{p} + 24 K \eta^2 + \eta \sqrt{100 K \log(1/\delta)} \\
         & \le  \frac{5\eta}{p} + 100 T p  \eta^2 + 20 \eta \sqrt{ T p \log(1/\delta)}.
    \end{align*}
    Similarly, the result for $\delta=0$ follows from basic composition.
    To finish the proof, consider the pair $W_t$ and $W'_t$. First, note that if $t<t_1$ then clearly $W_t$ and $W'_t$ are $0$-indistinguishable as they do not depend on $\ell_{t_1}$ or $\ell'_{t_1}$. For $t=t_1$, note that $X_{t_1}$ and $X'_{t_1}$ has the same distribution. Moreover, the definition of $Z_t$ implies that 
    $Z_t$ and $Z'_t$ are $\eta/p$-indistinguishable since
    \begin{align*}
    \frac{P(Z_t = 1)}{P(Z'_t = 1)}
        & \le \frac{(1-p)}{(1-p)(1-\eta)} \\
        &  = \frac{1}{1 - \eta} \\
        &  = 1 + \frac{\eta}{1-\eta} \\
        &  \le 1 + 2 \eta \\
        & \le e^{2\eta}. 
    \end{align*}
    Moreover, since $\eta \le p \diffp  $ we have
    \begin{align*}
    \frac{P(Z_t = 0)}{P(Z'_t = 0)}
        & \le \frac{p + (1-p)\eta}{p} \\
        &  \le 1 + \frac{\eta}{p} 
        \le e^{\eta/p}.
    \end{align*}
    Now consider $t>t_1$. If $\sum_{\ell=1}^{t-1} Y_\ell \ge K$ or $Y_t = Y'_t = 0$ then $X_t = X_{t-1}$ and $X'_t = X'_{t-1}$ and thus $X_t$ and $X'_t$ are $0$-indistinguishable.
    If $Y_t = Y'_t = 0$ then $X_t$ and $X'_t$ are $4\eta$-indistinguishable since $w_x^t/w_x^{'t} \le 1/(1-\eta) \le e^{2\eta}$ which implies that $P(x_t = x)/P(x'_t=x) \le e^{4\eta}$. Overall, $X_t$ and $X'_t$ are $4Y_t \eta$-indistinguishable. Moreover, since $t>t_1$, we have that $Z_{t+1}$ is a function of $X_t$ and $\ell_t$  and $Z'_{t+1}$ is a function of $X'_t$ and $\ell'_t=\ell_t$, hence by post-processing we get that $Z_{t+1}$ and $Z'_{t+1}$ are $4Y_t \eta$-indistinguishable. Overall, we have that $W_t$ and $W'_t$ are $\indic{\sum_{\ell=1}^{t-1} Y_\ell < K} 4 Y_t \eta$-indistinguishable.

\end{proof}

\subsection{Proof of~\cref{cor:sd-appr}}
\label{sec:apdx-cor-sd-appr} 
For these parameters, \cref{alg:SD} has privacy
\begin{equation*}
    \diffp_0/4 +  T p^3 \diffp_0^2/4 +  \diffp_0  \sqrt{T p^3\log(1/\delta)} \le 2\diffp_0.
\end{equation*}
As $\diffp_0 \le \diffp/2$, this proves the claim about privacy.
Moreover, its regret is
\begin{align*}
     \eta T
        + \frac{\ln d}{\eta} + 2 T e^{-Tp/3} 
    & \le T p \diffp_0/20 + 20 \ln d /(p\diffp_0) + 2 T e^{-Tp/3} \\
    & \le \frac{T^{2/3} \diffp_0}{\log^{1/3}(1/\delta)} + \frac{20 T^{1/3} \log^{1/3}(1/\delta) \ln d}{\diffp_0} + 2 T e^{-Tp/3} \\
    & \le \sqrt{T \ln d } + \frac{20 T^{1/3} \log^{1/3}(1/\delta) \ln d}{\diffp_0} + 2 T e^{-Tp/3} \\
    & \le O \left(\sqrt{T \ln d } + \frac{ T^{1/3} \log^{1/3}(1/\delta) \ln d}{\diffp} \right),
\end{align*}
where the last inequality follows as $\diffp_0 = \min( \diffp/2,\frac{\log^{1/3}(1/\delta) \sqrt{\ln d}}{T^{1/6}})$.

\subsection{Proof of~\cref{cor:pure}}
\label{sec:apdx-cor-pure} 
For these parameters, \cref{alg:SD} has privacy
\begin{equation*}
    \diffp/20 +  16 Tp \eta 
    \le \diffp/10 +  16Tp^2 \diffp/20
    \le \diffp.
\end{equation*}
Moreover, its regret is
\begin{align*}
     \eta T
        + \frac{\ln d}{\eta} + 2 T e^{-Tp/3} 
    & \le T p \diffp/20 + 20 \ln d /(p\diffp) + 2 T e^{-Tp/3} \\
    & \le \sqrt{T} + \frac{20 \sqrt{T} \ln d}{\diffp} + 2 T e^{-Tp/3},
\end{align*}
where the last inequality follows since $\diffp \le 1$.

\subsection{Proof of~\cref{cor:sd-batch}}
\label{sec:adpx-cor-sd-batch}
\iftoggle{arxiv}{}{
To prove~\cref{cor:sd-batch}, we first prove the following proposition which charactarizes the performance of the private shrinking dartboard algorithm with batches. We prove this result in~\cref{sec:apdx-thm-ub-priv-DS-batch}.
\begin{theorem}
\label{thm:ub-priv-DS-batch}
    Let $\ell_1,\dots,\ell_T \in [0,1]^d$ be chosen by an oblivious adversary. \cref{alg:SD} with batch size $1 \le B \le T$, $p < 1/2$, $\eta<1/2$, and $K = 2 T p/B $ has regret
    \begin{equation*}
     \E\left[ \sum_{t=1}^T \ell_t(x_t) - \min_{x \in [d]} \sum_{t=1}^T \ell_t(x) \right]
        \le \eta T
        + \frac{B \ln d}{\eta} + 2 T e^{-Tp/3B}.
    \end{equation*}
    Moreover, for $\delta>0$, \cref{alg:SD} is $(\diffp,\delta)$-DP where 
    \begin{equation*}
     \diffp = 
          \frac{5\eta}{Bp} + 100 T p  \eta^2/B^3 + \frac{20\eta}{B} \sqrt{12 T p/B \log(1/\delta)}. 
    \end{equation*}
\end{theorem}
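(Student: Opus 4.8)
The plan is to derive \cref{thm:ub-priv-DS-batch} from the non-batched guarantee of \cref{thm:ub-priv-DS}, using the fact that the batched procedure is literally an instance of \cref{alg:SD}: it runs \cref{alg:SD} on the grouped loss sequence $\tilde\ell_1,\dots,\tilde\ell_{\tilde T}\in[0,1]^d$ with $\tilde T=\ceil{T/B}$ rounds and switching budget $K=2Tp/B=2\tilde Tp$, and replays the model $x_s$ chosen in round $s$ of this instance for all $B$ original rounds of batch $s$. (Assume $B\mid T$; otherwise pad the last batch with zero losses, which only affects lower-order terms.)

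\emph{Regret.} Since each $\tilde\ell_s$ takes values in $[0,1]^d$, \cref{lemma:DS-marg-dist} applies to the grouped instance and shows that the marginal law of $x_s$ is within total variation $e^{-\tilde Tp/3}$ of the multiplicative-weights distribution $P^s$ on the grouped losses; combining this with Theorem~3 of \citet{GeulenVoWi10}, exactly as in the proof of \cref{thm:ub-priv-DS}, yields $\E[\sum_{s=1}^{\tilde T}\tilde\ell_s(x_s)-\min_x\sum_{s=1}^{\tilde T}\tilde\ell_s(x)]\le \eta\tilde T+\frac{\ln d}{\eta}+2\tilde Te^{-\tilde Tp/3}$. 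Because the model is constant within a batch, $\sum_{t=1}^T\ell_t(x_t)=B\sum_{s=1}^{\tilde T}\tilde\ell_s(x_s)$ and $\min_x\sum_{t=1}^T\ell_t(x)=B\min_x\sum_{s=1}^{\tilde T}\tilde\ell_s(x)$, so multiplying the grouped bound by $B$ and substituting $\tilde T=T/B$ gives exactly $\eta T+\frac{B\ln d}{\eta}+2Te^{-Tp/(3B)}$.

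\emph{Privacy.} Changing one original loss $\ell_{t_1}$ changes only the grouped loss $\tilde\ell_{s_1}$ of the batch containing $t_1$, and changes it in every coordinate by at most $1/B$ (it averages $B$ losses, one of which moved by at most $1$). I would then rerun the privacy argument of \cref{thm:ub-priv-DS} on the grouped instance with this reduced sensitivity, tracking the extra factor $1/B$ through the two places where the loss magnitude enters the per-round indistinguishability of the pairs $W_s=(X_s,Z_{s+1})$. At the single round where $\tilde\ell_{s_1}$ enters a $Z$-draw, the estimate $P(Z=0)/P(Z'=0)\le 1+\eta/p$ improves to $1+\eta/(Bp)$, using $|(1-(1-\eta)^a)-(1-(1-\eta)^b)|\le 1-(1-\eta)^{1/B}\le \eta/B$ for $|a-b|\le 1/B$ together with $P(Z'=0)\ge p$, while the $Z=1$ ratio stays within $e^{\pm2\eta/B}$; and for a later round $s$ in which a switch occurs, the weight ratio $w_x^s/w_x^{'s}=(1-\eta)^{\tilde\ell_{s_1}(x)-\tilde\ell'_{s_1}(x)}$ lies in $[e^{-2\eta/B},e^{2\eta/B}]$, so $X_s$ and $X'_s$ (hence also $Z_{s+1}$, by post-processing) are $4\eta/B$-indistinguishable instead of $4\eta$. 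Thus the composition consists of one $\tfrac{\eta}{Bp}$-DP step and at most $K=2Tp/B$ steps that are each $\tfrac{4\eta}{B}$-DP; applying advanced composition~\citep{DworkRo14} to the $K$ switch steps and composing basically with the single critical step gives $\diffp\le \frac{\eta}{Bp}+\frac{3}{2}\cdot 16K\frac{\eta^2}{B^2}+\sqrt{6\cdot 16K\frac{\eta^2}{B^2}\log(1/\delta)}\le \frac{5\eta}{Bp}+100Tp\frac{\eta^2}{B^3}+\frac{20\eta}{B}\sqrt{12Tp/B\log(1/\delta)}$.

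The regret translation and the final composition algebra are routine. The main obstacle is the privacy step: one must go back into the coupling $W_s=(X_s,Z_{s+1})$ from the proof of \cref{thm:ub-priv-DS} and verify that \emph{every} inequality there still closes after the loss sensitivity is scaled by $1/B$ --- in particular that the $Z$-Bernoulli parameter estimates and the multiplicative-weight ratio bounds keep their form with the extra $1/B$ factor --- and, for \cref{lemma:DS-marg-dist}, to re-confirm that $\eta$ is small relative to $p$ in the regime of interest so that the switch-budget Chernoff bound still applies.
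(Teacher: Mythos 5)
Your proposal is correct and follows essentially the same route as the paper's proof: run \cref{alg:SD} on the grouped losses, multiply the grouped regret bound by $B$, and redo the privacy coupling of \cref{thm:ub-priv-DS} with the loss sensitivity scaled to $1/B$ in both the $Z$-ratio and the weight-ratio bounds before applying advanced composition. The only cosmetic difference is that you compose the single $\tfrac{\eta}{Bp}$-DP critical step basically rather than folding it into the advanced-composition sum as the paper does, which still lands within the stated constants.
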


We are now ready to prove~\cref{cor:sd-batch}.
}

For these parameters, \cref{alg:SD} has privacy
\begin{equation*}
    \frac{5\eta}{Bp} + \frac{100 T p  \eta^2}{B^3} + \frac{20\eta}{B^{3/2}} \sqrt{ T p \log(1/\delta)} 
    \le \diffp/8 +  \frac{T p^3 \diffp^2}{16B} +   \frac{\diffp}{2} \sqrt{ T p^3 \log(1/\delta)/B} 
    \le \diffp.
\end{equation*}
Moreover, its regret is
\begin{align*}
     \eta T
        + \frac{B\ln d}{\eta} + 2 T e^{-Tp/3B} 
    & \le T B p \diffp/40 + 40 \ln d /(p\diffp) + 2 T e^{-Tp/3B} \\
    & \le \frac{T^{2/3} B^{4/3} \diffp}{\log^{1/3}(1/\delta)} + \frac{40T^{1/3} \log^{1/3}(1/\delta) \ln d}{B^{1/3}\diffp} + 2 T e^{-Tp/3B} \\
    & \le O \left( \frac{T^{2/5} \log^{1/5}(1/\delta) \log^{4/5}(d))  }{\diffp^{4/5}}  + 2 T e^{-Tp/3B} \right),
\end{align*}
where the last inequality follows by choosing $B = \frac{\log^{2/5}(1/\delta) \log^{3/5}(d)}{T^{1/5} \diffp^{3/5}} $ (note that $B \ge 1$ for $\diffp \le  \frac{\log^{2/3}(1/\delta) \log(d)}{T^{1/3}}$) and noticing that $Tp/B \ge \Omega(T^{2/5})$ for these parameters as we have a lower bound on $\diffp$.

\iftoggle{arxiv}{
\subsection{Proof of~\cref{thm:ub-priv-DS-batch}}
\label{sec:apdx-thm-ub-priv-DS-batch}
}{
\subsection{Proof of~\cref{thm:ub-priv-DS-batch}}
\label{sec:apdx-thm-ub-priv-DS-batch}
}
    The same analysis as in~\cref{thm:ub-priv-DS} yields regret
    \begin{equation*}
     \E\left[ \sum_{t=1}^{\tilde T} \tilde \ell_t(\tilde x_t) - \min_{x \in [d]} \sum_{t=1}^T \tilde  \ell_t(x) \right]
        \le \eta \tilde T
        + \frac{\ln d}{\eta} + 2 \tilde  T e^{-\tilde Tp/3}.
    \end{equation*}
    Setting $x_t = \tilde x_{\floor{t/B}}$ and multiplying both sides by $B$, we have regret
    \begin{equation*}
     \E\left[ \sum_{t=1}^{ T}  \ell_t(x_t) - \min_{x \in [d]} \sum_{t=1}^T  \ell_t(x) \right]
        \le \eta  T
        + \frac{B \ln d}{\eta} + 2  T e^{-\tilde Tp/3}.
    \end{equation*}
    
    Let us now analyze privacy. The privacy follows the same steps as in the proof of~\cref{thm:ub-priv-DS} with two main differences. First, let $t=t_1$ be the time such that $\tilde \ell_{t_1}$ contains the differing loss function and let $\tilde \ell_{t} = \tilde \ell_{t}(x_{t-1})$ and $\tilde \ell'_{t} = \tilde \ell'_{t}(x_{t-1})$. Note that $|\tilde \ell_{t_1} - \tilde \ell'_{t_1}| \le 1/B$ thus we have that $Z_t$ and $Z'_t$ are $\eta/(Bp)$-indistiguishable since
     \begin{align*}
    \frac{P(Z'_t = 1)}{P(Z_t = 1)}
        & \le \frac{(1-p)(1-\eta)^{\tilde \ell'_{t-1}}}{(1-p)(1-\eta)^{\tilde \ell_{t-1}}} \\
        &  \le  (1-\eta)^{-|\tilde \ell_{t-1} - \tilde \ell'_{t-1}|} \\
        &  \le e^{2\eta/B}.
    \end{align*}
    Moreover, assuming w.l.o.g. that $\tilde \ell'_{t-1} \ge \tilde \ell_{t-1}$, we have
    \begin{align*}
    \frac{P(Z'_t = 0)}{P(Z_t = 0)}
        & \le \frac{p + (1-p)(1 - (1-\eta)^{\tilde \ell'_{t-1}})}{p + (1-p)(1 - (1-\eta)^{\tilde \ell_{t-1}}) } \\
        &  \le 1 + \frac{(1-p)|1 - (1-\eta)^{\tilde \ell'_{t-1} - \tilde \ell_{t-1}}|}{p + (1-p)(1 - (1-\eta)^{\tilde \ell_{t-1}}) } \\
        & \le 1 + \frac{|1 - (1-\eta)^{\tilde \ell'_{t-1} - \tilde \ell_{t-1}}|}{p} \\
        & \le 1 + \frac{|{\tilde \ell'_{t-1} - \tilde \ell_{t-1}}|}{p} \le e^{\eta/(Bp)}.
    \end{align*}
    The second difference in the privacy analysis is that the sensitivity of the score of the exponential mechanism is now $1/B$ hence $X_t$ and $X'_t$ are now $4\eta/B$-DP. This shows that  $W_t$  given $\{ W_\ell \}_{\ell=0}^{t-1}$ and $W'_t$  given $\{ W'_\ell \}_{\ell=0}^{t-1}$  are $\diffp_t$-indistinguishable where 
    \begin{equation*}
        \diffp_t = 
        \begin{cases}
             0  & \text{if } t < t_1 \\
             \eta/(Bp) & \text{if } t = t_1 \\
             \indic{\sum_{\ell=1}^{t-1} Y_\ell < K} 4 Y_t \eta/B &\text{if } t > t_1
        \end{cases}
    \end{equation*}
    The result then follows from advanced composition~\citep{DworkRo14}: the final privacy parameter is 
    \begin{align*}
    \diffp_f 
        & \le \frac{3}{2} \sum_{t=1}^T \diffp_t^2 + \sqrt{6 \sum_{t=1}^T \diffp_t^2 \log(1/\delta) } \\
        & \le \frac{3}{2} (\frac{\eta}{Bp} + 16K \eta^2/B^2) + \sqrt{6(\frac{\eta^2}{B^2p^2} + 16K \eta^2/B^2)\log(1/\delta)  } \\
        & \le  \frac{5\eta}{Bp} + 24 K \eta^2/B^2 + \frac{10\eta}{B} \sqrt{K \log(1/\delta)} \\
         & \le  \frac{5\eta}{Bp} + 100 T p  \eta^2/B^3 + \frac{20\eta}{B} \sqrt{12 T p/B \log(1/\delta)}.
    \end{align*}

\section{Proofs for~\cref{sec:ub-stoch}}
\label{sec:apdx-ub-stoch}

\subsection{Proof of~\cref{thm:ub-stoch-OCO}}
\label{sec:thm-ub-stoch-OCO}
The privacy claim is immediate as each sample $\ell_i$ is used only once in running a single \ed-DP algorithm. Now we prove the claim about utility. Consider time-step $t=2^i$ where we invoke a DP-SCO algorithm with $t/2 = 2^{i-1}$ samples. 
Therefore the guarantees of the algorithm imply that at iteration $t$ we have
\begin{equation*}
        \E_{\ell_t \sim P} \left[\ell_t(x_t) - \min_{x \in [d]} \ell_t(x) \right] \le O \left( \Delta_{2^i} \right).
\end{equation*}
Therefore at phase $i$, that is $2^{i} \le t \le 2^{i+1}$, the total regret is at most 
\begin{equation*}
    \E\left[ \sum_{t=2^{i}}^{2^{i+1}}\ell_t(x_t) - \min_{x \in [d]} \sum_{t=2^{i}}^{2^{i+1}} \ell_t(x) \right] 
        \le O \left(2^i \Delta_{2^i} \right).
\end{equation*}
Summing over $i$ proves the claim.

\subsection{Proof of~\cref{cor:DP-exp-stoch}}
\label{sec:apdx-cor-DP-exp-stoch}
The algorithm $\Aone$ is $\diffp$-DP and has excess population loss $\Delta_n = O(\sqrt{\log(d)/n} + \log(d)/n\diffp)$~\cite[Theorem 6]{AsiFeKoTa21}. Thus, \cref{thm:ub-stoch-OCO} implies that 
\begin{align*}
     \E\left[ \sum_{t=1}^T \ell_t(x_t) - \min_{x \in [d]} \sum_{t=1}^T \ell_t(x) \right]
        & \le \sum_{i=1}^{\log T} 2^i \Delta_i \\
        & \le O \left( \sum_{i=1}^{\log T} 2^{i/2} \sqrt{\log(d)} + \log(d)/\diffp \right) \\
        & \le O \left( \sqrt{T \log(d)} + \log(d) \log(T)/\diffp \right). 
\end{align*}

\section{Proofs for~\cref{sec:lower-bounds}}

\subsection{Proof of~\cref{thm:lb-adaptive-adv}}
\label{sec:apdx-thm-lb-adaptive-adv}

We build on the following property of the padded Tardos code as done in finger-printing lower bounds.
Given a matrix $X \in \{-1,+1\}^{(n+1) \times p}$, we say that $j \in [p]$ is a consensus column if the column is equal to the all one vector or its negation. Let $X_{(i)} \in \{-1,+1\}^{n \times p}$ denote the matrix that results from removing the $i$'th row in $X$. Moreover, we let $\bar X \in \R^p$ denote the sum of the rows of $X$, that is, $\bar X_j = \sum_{i=1}^{n+1} X_{ij}$. Finally, for $v \in \R^p$ let $\sign(v) \in \{-1,+1\}^p$ denote the signs of the entries of $v$
\begin{theorem}[{\citealp[Theorem 3.2]{TalwarThZh15}}]
\label{thm:lb-fb-matrix}
    Let $p = 1000m^2$ and $n = m/\log m$ for sufficiently large $m$. There exists a matrix $X \in \{-1,+1\}^{(n+1) \times p}$ such that 
    \begin{itemize}
        \item There are at least $0.999p$ consensus columns in $X_{(i)}$
        \item Any algorithm $\A: \{-1,+1\}^{n \times p} \to \{-1,+1\}^p$ such that $\lzero{\A(X_{(i)}) - \mathsf{sign}(\bar X_{(i)})} \le 1/4$ for all $i\in[n+1]$ with probability at least $2/3$ then $\A$ is not $(1,n^{-1.1})$-DP. 
    \end{itemize}
\end{theorem}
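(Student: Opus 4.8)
Since \cref{thm:lb-fb-matrix} is quoted from \citet{TalwarThZh15}, the plan is to recall the construction behind it: a \emph{padded Tardos code}, analyzed via the approximate-DP fingerprinting argument of \citet{BunUV18}. The matrix $X$ is produced by the probabilistic method — I would specify a product distribution over $\{\pm1\}^{(n+1)\times p}$, show that a random draw satisfies each bullet with high probability, and take any realization in the intersection of the two events.

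The distribution is a mixture over columns. A fraction $1-\kappa$ of the columns are \emph{padding}: each such column is all $+1$ or all $-1$, chosen uniformly, and stays consensus in $X$ and in every $X_{(i)}$. The other $\kappa p$ columns are \emph{Tardos columns}: column $j$ receives a bias $q_j$ drawn from a Tardos-type density on $[0,1]$ and has independent entries with $\Pr[X_{ij}=1]=q_j$; the density is chosen so that $q_j$ is useful for fingerprinting yet concentrated near $\{0,1\}$ enough that the column is consensus except with probability $o(1/(np))$. With $\kappa$ a small constant, a union bound over the $(n+1)p$ (deletion, column) pairs gives that with high probability at least $0.999p$ columns are consensus in every $X_{(i)}$, which is the first bullet.

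For the second bullet, suppose toward a contradiction that $\A$ is $(1,n^{-1.1})$-DP and that for each $i$ it satisfies $\lzero{\A(X_{(i)}) - \mathsf{sign}(\bar X_{(i)})} \le p/4$ with probability $\ge 2/3$. Since on a Tardos column the Tardos density forces $\mathsf{sign}(\bar X_{(i)})_j = \mathsf{sign}(2q_j-1)$ with high probability, the accuracy guarantee — together with $\kappa p$ being chosen large relative to the $p/4$ Hamming slack — makes $\A(X_{(i)})_j$ track $\mathsf{sign}(2q_j-1)$, hence correlate with $q_j$, on $\Omega(\kappa p)$ of the Tardos columns. Now run the fingerprinting argument on the full matrix $X$: with $\mu_j = 2q_j-1$ and $f = \A(X)$, the fingerprinting lemma of \citet{BunUV18}, applied column by column and summed over the Tardos columns (using that the density keeps the residuals $\sum_j \E[|f_j-\mu_j|]$ under control), lower-bounds the aggregate score $\sum_i\sum_j f_j(X_{ij}-\mu_j)$ by $\Omega(\kappa p)$. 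On the other hand $\A(X_{(i)})$ ignores user $i$'s row, while $(1,n^{-1.1})$-DP makes $\A(X)$ and $\A(X_{(i)})$ indistinguishable under a single-row change; the standard computation then bounds each user's contribution to the aggregate score by $O(p)$ and the total over the $n$ users by $O(np)$. These clash unless $\kappa p = O(np)$ up to logarithmic factors, i.e.\ essentially $p = O(n^2)$; but $p = 1000m^2$ with $n = m/\log m$ gives $p = \Theta(n^2\log^2 n)$, a contradiction. Hence no such $\A$ exists, and a final union bound extracts a single $X$ satisfying both bullets.

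The main obstacle is the tension between the two bullets. The first wants the biases $q_j$ of the Tardos columns extremely close to $\{0,1\}$ — both to keep those columns consensus and to make the fingerprinting residual $\E[|f_j-\mu_j|]$ small — whereas the second wants $q_j$ spread out, and $\kappa p$ large relative to the allowed $p/4$ Hamming errors, so that an $\ell_0$-accurate algorithm is genuinely forced to correlate with $q_j$ on $\Omega(\kappa p)$ Tardos columns rather than sacrificing all of them within its error budget. Threading this needle is precisely what the padded Tardos density of \citet{TalwarThZh15} accomplishes, and the comfortable gap $p=\Theta(n^2\log^2 n)$ versus the $n^2$ fingerprinting threshold is what leaves room. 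The remaining steps — converting the ``$\ge 2/3$-probability, $\ell_0\le p/4$'' accuracy statement into an in-expectation correlation bound by conditioning on the accuracy event and restricting to Tardos columns, and the single-row DP computation — are routine.
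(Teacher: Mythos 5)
The paper never proves \cref{thm:lb-fb-matrix}; it imports it verbatim from \citet{TalwarThZh15}, so the only question is whether your reconstruction is sound, and it has a genuine gap at the very last step, ``a final union bound extracts a single $X$ satisfying both bullets.'' The second bullet is universally quantified over algorithms, so there is no countable family to union over, and your core estimate is inherently distributional: the fingerprinting-lemma lower bound $\sum_i\sum_j f_j(X_{ij}-\mu_j)=\Omega(\kappa p)$ is an expectation over the random biases $q_j$ and the random entries of $X$, so it rules out an algorithm that is simultaneously DP and accurate \emph{when the matrix is drawn from the construction}; it says nothing about any fixed realization once the randomness is spent. Worse, the fixed-matrix form with slack $p/4$ cannot be proved by any argument: for any fixed $X$ satisfying the first bullet, the input-independent algorithm that always outputs $\mathsf{sign}(\bar X)$ is $(0,0)$-DP, and on every consensus column $j$ of $X_{(i)}$ the sums $\bar X_{(i),j}=\pm n$ and $\bar X_j=\bar X_{(i),j}+X_{ij}$ have the same sign, so this algorithm disagrees with $\mathsf{sign}(\bar X_{(i)})$ on at most $0.001p\le p/4$ columns for every $i$ --- it satisfies the accuracy hypothesis while being perfectly private. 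The actual theorem of \citet{TalwarThZh15} is a tracing/re-identification statement in which the randomness of the (padded fingerprinting-code) matrix is retained, equivalently the hard matrix is produced against a given algorithm; that weaker quantification is also all the paper's reduction in \cref{thm:lb-adaptive-adv} needs, since the adaptive adversary there is constructed per algorithm.

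Two secondary points. First, the source construction pads a Tardos fingerprinting code (with its tracing algorithm, as in \citep{BunUV18}) by at least $0.999p$ exactly-consensus columns and confines the code to the remaining $\le 0.001p\approx n^2$ (up to logarithmic factors) columns; it does not require the fingerprinting columns themselves to be nearly consensus. Your variant, which wants the Tardos biases within $o(1/(np))$ of $\{0,1\}$ while still extracting $\Omega(\kappa p)$ total correlation, is precisely the tension you flag, and it is doubtful the score lower bound survives biases that extreme --- the padding route simply sidesteps the issue, and the parameter choice $p=1000m^2$, $n=m/\log m$ exists exactly to make the code fit in the $0.001p$ non-consensus columns. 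Second, a minor attribution slip: the column-wise correlation ``fingerprinting lemma'' you invoke is from the later score-based line of work, whereas \citep{BunUV18} is the fingerprinting-codes-with-tracing argument used by \citet{TalwarThZh15}.
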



Building on~\cref{thm:lb-fb-matrix}, we can now prove our main lower bound.

\newcommand{\Scons}{S_{\mathsf{cons}}}
\begin{proof}[of \cref{thm:lb-adaptive-adv}]
    First, we prove the lower bound for $\diffp \le 1/(\sqrt{T} \log T)$, that is, we prove the regret has to be linear in this case.
    We will reduce the problem of private sign estimation to DP-OPE with adaptive adversaries and use the lower bound of~\cref{thm:lb-fb-matrix}. To this end, given an algorithm $\A$ for DP-OPE and an input $X \in \{-1,+1\}^{n \times p}$, we have the following procedure for estimating the signs of the columns of $X$. We design an online experts problem that has $d = 2p$ experts where column $j \in [p]$ in $X$ will have two corresponding experts $2j$ and $2j+1$ (corresponding to the sign of column $j$). We initialize the vector of signs $s_j = 0$ for all $1 \le j \le p$. We have $T=0.9p$ rounds and at round $1 \le t \le T$ we sample a user $i_t \sim [n]$ (arbitrarily while enforcing that each $i \in [n]$ appears at most $2T/n$ times) and play a loss function $\ell_t: [d] \to  \{ 0,1 \}$ such that
    \begin{equation*}
        \ell_{t}(2j+1) = 
        \begin{cases}
            1 & \text{if } s_j \neq 0 \\
            \frac{X_{i_t,j}+1}{2} & \text{otherwise}
        \end{cases}
    \end{equation*}
    We also set  
    \begin{equation*}
        \ell_{t}(2j+2) = 
        \begin{cases}
            1 & \text{if } s_j \neq 0 \\
            \frac{-X_{i_t,j}+1}{2} & \text{otherwise}
        \end{cases}
    \end{equation*}
    The idea of this loss function is that the $2j+1$ and $2j+2$ experts will represent the signs of the $j$'th column. If the sign of the $j$'th column is $+1$, then expert $2j+2$ will have better loss and hence should be picked by the algorithm. Moreover, whenever the algorithm has estimated the sign of the $j$'th column ($s_j \neq 0$), we set the loss to be $1$ for both experts $2j+1$ and $2j+2$, in order to force the online algorithm to estimate the sign of new columns. 
    
    Then, given the output of the algorithm $\A$ at time $t$, that is $x_t = \A(\ell_1,\dots,\ell_{t-1})$ we set $s_j = -1$ if $x_{t} = 2j+1$ and $s_j = 1$ if $x_{t} = 2j+2$ and otherwise we keep $s_j$ unchanged. 
    Moreover, there is an expert that achieves optimal loss, that is, for some $x\opt \in [d]$ we have
    \begin{equation*}
        \sum_{t=1}^T \ell_t(x\opt) = 0.
    \end{equation*}
    This follows since $X$ has at least $0.999p$ consensus columns hence there is a zero-loss expert after $T=0.9p$ iterations. Now we show that if an algorithm $\A$ has small regret, then the vector $s$ estimates the sign of at least $0.8p$ columns. To this end, let $ j_t = \floor{x_t/2}$ denote the column corresponding to the expert picked by the algorithm at time $t$, $S = \{j_t : t \in[T] \}$, and $\Scons = \{ j \in [p]: \text{column j is a consensus column} \} $. Observe that the regret of the algorithm is
    \begin{align*}
        \sum_{t=1}^T \ell_t(x_t) 
        & = \sum_{j_t \in S} \indic{s_{j_t}=1} \ell_t(2j_t+2)
        +  \indic{s_{j_t}=-1} \ell_t(2j_t+1)
        \\ 
        & = \frac{1}{2} \sum_{j_t \in S} \indic{s_{j_t}=1} (-X_{i_t,j_t}+1)
        +  \indic{s_{j_t}=-1} (X_{i_t,j_t}-1) \\
        & = \frac{1}{2} \sum_{j_t \in S} \indic{s_{j_t}=1, X_{i_t,j_t} = -1}
        +  \indic{s_{j_t}=-1, X_{i_t,j_t} = 1} \\
        & = \frac{1}{2} \sum_{j_t \in S} \indic{s_{j_t} \neq X_{i_t,j_t}} \\
        & \ge -0.001 p + \frac{1}{2} \sum_{j_t \in S \cap \Scons} \indic{s_{j_t} \neq X_{i_t,j_t}} \\
        & \ge -0.001 p + \frac{1}{2} \sum_{j_t \in S \cap \Scons} \indic{s_{j_t} \neq \sign(\bar X)_{j_t}}.
    \end{align*}
    Assume towards a contradiction that $\A$ is $(1/200\sqrt{T}\log(T),\delta)$-DP where $\delta \le 1/T^3$ and that the expected regret is at most $T/1000$. Markove inequality implies that with probability at least $9/10$ the regret is at most $T/100$.  Under this event we have
    \begin{equation*}
     \sum_{j_t \in S \cap \Scons} \indic{s_{j_t} \neq \sign(\bar X)_{j_t}} \le 0.002 T.    
    \end{equation*}
    Now note that we can assume that the online algorithm picks $x_t$ such that each $j_t$ appears at most one. Otherwise we can modify the algorithm to satisfy this property while not increasing the regret: whenever the algorithm picks $x_t$ such that $j_t$ appeared before, the loss of this expert is $1$, hence we can randomly pick another expert $x_t$ such that $j_t$ has not appeared. This implies that $|S| = T = 0.9p$ and hence 
    $|S \cap \Scons| \ge 0.85p$. Therefore we have that $s_{j_t} = \sign(\bar X)_{j_t}$ for at least $0.8p$ columns from $\Scons$ with probability $0.9$. To finish the proof, we need to argue about the final privacy guarantee of the sign vector $s$; we will prove that $s$ is $(1,T\delta)$-DP which will give a contradiction to~\cref{thm:lb-fb-matrix} and prove the claim. To this end, note that the algorithm $\A$ is $(1/200\sqrt{T}\log(T),\delta)$-DP. Moreover, recall that each row $i\in[n]$ appears at most $k \le 2T/n \le 2p/n \le 200 \sqrt{p} \log(p)$ times, hence group privacy implies the final output $s$ is $(k\diffp,k\delta)$-DP, that is, $(1,1/T^2)$-DP.
    
    Now we proceed to prove the lower bound for larger values $\diffp \ge 1/(\sqrt{T} \log T)$. Note that if $\diffp \ge \log(T)/T^{1/4}$ then the non-private lower bound of $\sqrt{T \log d}$ is sufficient. Otherwise, consider an algorithm $\A$ that is $\diffp$-DP and consider an adversary that in the first $T_0 < T$ iterations behaves the same as the above where $\diffp = 1/(\sqrt{T_0} \log T_0)$. Then in the last $T - T_0$ iterations it sends $\ell_t(x)=0$ for all $x \in [d]$. The above lower bound implies that the algorithm has to pay regret $\Omega(T_0)= \Omega(1/(\diffp \log T_0)^2)$. The claim follows as $T_0 \le T$.

\end{proof}

\subsection{Proof for~\cref{thm:lb-adaptive-adv-pure}}
\label{sec:proof-lb-pure}

To prove a lower bound for pure DP, we use the following version of~\cref{thm:lb-fb-matrix} for this setting.
\begin{theorem}[{\citealp[Theorem A.1]{SteinkeUl17}}]
\label{thm:pure-sign-est}
    Let $d = 1000n$ and $n$ sufficiently large. Let $\mc{X} = \{X \in \{-1,+1\}^{n \times d} :$ all the columns in $X$ are consensus columns$\} $. Let $\A: \{-1,+1\}^{n \times d} \to \{-1,+1\}^d$ be an algorithm such that for all $X \in \mc{X}$,
    \begin{equation*}
      \E[\lzero{\A(X) - \mathsf{sign}(X)}] \le 1/4.  
    \end{equation*}
     Then $\A$ is not $1$-DP.
\end{theorem}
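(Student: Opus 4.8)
The plan is a packing-style argument adapted to the \emph{constant}-accuracy regime; the usual "disjoint Hamming balls around codewords" packing does not work here, since any exponentially large subset of $\{-1,+1\}^d$ contains pairs at relative Hamming distance close to $1/2$, so balls of relative radius $1/4$ already overlap. Instead I would push all the relevant probability mass onto a single fixed dataset and then average over \emph{all} sign vectors, replacing disjointness of events by concentration of a random $\pm1$ inner product. So, suppose towards a contradiction that $\A$ is $1$-DP and satisfies the accuracy hypothesis. Since every column of $X\in\mc{X}$ is constant, $X$ is exactly the dataset $X^{(s)}$ all of whose rows equal some $s\in\{-1,+1\}^d$, and $\mathsf{sign}(X^{(s)})=s$. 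Writing $\hat s=\A(X^{(s)})$, the hypothesis $\E[\lzero{\hat s-s}]\le 1/4$ (expected fraction of sign errors) gives $\E[\langle \hat s,s\rangle]=\E[d-2\,\#\{\text{errors}\}]\ge d/2$, and since $\langle\hat s,s\rangle\le d$ a one-line Markov bound yields $\Pr[\langle \A(X^{(s)}),s\rangle\ge d/4]\ge 1/4$ for every $s\in\{-1,+1\}^d$.

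Next I would invoke group privacy. Fix $s$ and let $X^{(\mathbf 1)}$ be the all-ones dataset; it differs from $X^{(s)}$ in at most $n$ rows, so $1$-DP together with group privacy gives, for the event $E_s=\{y:\langle y,s\rangle\ge d/4\}$, the chain $\tfrac14\le\Pr[\A(X^{(s)})\in E_s]\le e^{n}\Pr[\A(X^{(\mathbf 1)})\in E_s]$, hence $\Pr[\A(X^{(\mathbf 1)})\in E_s]\ge \tfrac{1}{4e^{n}}$ for \emph{every} $s$.

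Now take $s\sim\mathrm{Unif}(\{-1,+1\}^d)$ and condition on the output $y=\A(X^{(\mathbf 1)})$, which is independent of $s$. The left side stays $\ge\tfrac{1}{4e^{n}}$, while the right side becomes $\E_y\big[\Pr_s[\langle y,s\rangle\ge d/4]\big]$. For fixed $y$, the coordinates $y_j s_j$ are i.i.d.\ Rademacher, so $\langle y,s\rangle$ is a sum of $d$ Rademachers and Hoeffding's inequality gives $\Pr_s[\langle y,s\rangle\ge d/4]\le e^{-d/32}$, uniformly in $y$. Therefore $\tfrac{1}{4e^{n}}\le e^{-d/32}$, i.e.\ $d/32\le n+\ln 4$. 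Plugging in $d=1000n$ gives $31.25\,n\le n+\ln 4$, which fails for all $n\ge 1$, and in particular for $n$ sufficiently large; this contradicts the assumption that $\A$ is $1$-DP, proving the theorem.

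The crux — and the step I expect to require the most care — is the idea of trading disjoint events for the statement "a uniformly random sign vector is far from any fixed output vector"; this is precisely what lets the argument survive a \emph{constant} error $1/4$ rather than only $o(1)$. Two remarks. Pure DP is essential: under $(\diffp,\delta)$-DP, group privacy over $n$ rows carries an additive term of order $n\,e^{n\diffp}\delta$ that would swamp the bound unless $\delta$ is exponentially small, so the statement is genuinely about $\delta=0$. And the threshold $d/4$ is not special — any $cd$ with $0<c<1/2$ works, with the constant $1000$ in $d=1000n$ leaving ample slack in the final inequality.
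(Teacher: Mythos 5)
Your proof is correct. One point of comparison worth making explicit: the paper does not prove this statement at all --- it imports it as a black box, citing Theorem~A.1 of Steinke and Ullman (2017), and only the reduction from DP-OPE to sign estimation is carried out in \cref{sec:proof-lb-pure}. So there is no internal proof to match yours against; what you have supplied is a self-contained derivation of the cited lower bound. Your route is the natural one for pure DP: push mass onto the single dataset $X^{(\mathbf 1)}$ via group privacy over at most $n$ rows (an $e^{n}$ factor, valid only because $\delta=0$), and then replace the usual disjoint-ball packing --- which, as you correctly note, cannot give exponentially many codewords at relative distance above $1/2$ (Plotkin) --- by averaging over a uniform $s$ and using Hoeffding to show a fixed output correlates with almost no $s$. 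Each step checks out: the Markov step gives $\Pr[\langle\A(X^{(s)}),s\rangle\ge d/4]\ge 1/3\ge 1/4$ under either reading of $\lzero{\cdot}$ (fraction or count of disagreements), the Fubini/independence step is legitimate since $s$ is chosen independently of the algorithm's randomness on the fixed input, and the final numerics $d/32\le n+\ln 4$ with $d=1000n$ indeed contradict $n\ge 1$. This anti-concentration (or equivalently, volume-counting) twist is essentially the same mechanism that underlies the original Steinke--Ullman packing argument, so your proof buys a self-contained justification of the black box at no loss of generality, and your closing remarks --- that pure DP is essential because the $\delta$ term inflates by $e^{n}$ under group privacy, and that the thresholds $d/4$ and $d=1000n$ have ample slack --- are accurate.
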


Using the bound of~\cref{thm:pure-sign-est} and following the same steps as in the proof of~\cref{thm:lb-fb-matrix}, the lower bound of~\cref{thm:lb-adaptive-adv-pure} now follows.

\subsection{Proof of~\cref{thm:lb-large-peps}}
\label{sec:thm-lb-large-peps}


We use similar ideas to the one in the proof of~\cref{thm:lb-adaptive-adv} where we used a DP-OPE algorithm for sign estimation. Instead of designing two experts for each column, the idea here is to look at subsets of columns of size $k$ and design $2^k$ experts to represent the sign vector of these $k$ columns.

Given an input $X \in \{-1,+1\}^{n \times p}$ where we assume for simplicity that $p/k$ is an integer, we design an expert problem with $d = 2^k \binom{p}{k}  $ experts. 
Instead of representing the experts as integers $x \in [d]$, we use an equivalent representation where an expert is a pair $(S,v)$ where $S \subset [p]$ is a set of columns of size $k$ and $v \in \{-1,+1\}^k$ represents the signs that this expert assigns for columns in $S$. We initialize the vector of signs $s_j = 0$ for all $1 \le j \le p$.

Here we have $T=0.9p/k$ rounds and at round $1 \le t \le T$ we sample a user $i_t \sim [n]$ (arbitrarily while enforcing that each $i \in [n]$ appears at most $2T/n$ times) and play a loss function $\ell_t$ such that
\begin{equation*}
        \ell_{t}(S,v) = 
        \begin{cases}
            1 & \text{if } s_j \neq 0 \text{ for some } j \in S \\
            0 & \text{otherwise if } \sign(\bar X_S) = v \\
            1 & \text{otherwise}
        \end{cases}
\end{equation*}
 Now, given the output of the algorithm $\A$ at time $t$, that is $x_t = (S_t,v_t)$ we set $s_{S_t} = v_t$ (we assume without loss of generality that each $j \in [p]$ will appear in at most a single $S_t$. Otherwise, similarly to the proof of~\cref{thm:lb-adaptive-adv}, we can ensure this property while not increasing the regret).
    Moreover, at the end of the game, there is a set $S \subset [p]$ of size $k$ that contains only consensus columns which were not estimated earlier ($S \cap S_t = \emptyset$ for all $t$). This follows from the fact that $X$ has at least $0.999p$ consensus columns hence there is at least $0.05p \ge k$  consensus columns that have not appeared in $S_1,\dots,S_T$, hence there is an expert $(S,v)$ such that 
    \begin{equation*}
        \sum_{t=1}^T \ell_t(S,v) = 0.
    \end{equation*}
    Now we show that if an algorithm $\A$ has small regret, then the vector $s$ estimates the sign of at least $0.8p$ columns. Observe that the regret of the algorithm is
    \begin{align*}
        \sum_{t=1}^T \ell_t(x_t) 
         & = \sum_{t=1}^T  \ell_t(S_t,v_t) 
        \\ 
        & = \sum_{t=1}^T \indic{\sign(\bar X_{S_t}) \neq v_t}
        \\ 
        & = \sum_{t=1}^T \indic{\sign(\bar X_{S_t}) \neq s_{S_t}}.
    \end{align*}
    Assume towards a contradiction that $\A$ is $(\diffp,\delta)$-DP where $\diffp \le  \frac{\sqrt{k/T}}{200\log(T)}$ and $\delta \le 1/T^3$ and that the expected regret is at most $T/1000$. Markov inequality implies that with probability at least $9/10$ the regret is at most $T/100$. 
    Note that $|S| = kT = 0.9p$.  Under this event we have
    \begin{equation*}
     \sum_{t=1}^T \indic{\sign(\bar X_{S_t}) \neq s_{S_t}}  \le 0.002 T.    
    \end{equation*}
    Hence we have that $\sign(\bar X_{S_t}) = s_{S_t} $ for at least $0.9T$ rounds. As each round has $k$ distinct columns, we have $s_j = \sign(\bar X_j)$ for at least $0.9kT \ge 0.8 p$. As there are at most $0.001p$ non-consensus columns, this means that $s_j = \sign(\bar X_j)$ for at least $0.75p$ consensus columns. Now we prove that $s$ is also $(1,1/T^2)$-DP which gives a contradiction to~\cref{thm:lb-fb-matrix}. To this end, note that the algorithm $\A$ is $(\diffp,\delta)$-DP where $\diffp \le  \frac{\sqrt{k/T}}{200\log(T)} \le  \frac{k/\sqrt{p}}{200 \rho \log(p)} $. Moreover, recall that each row $i\in[n]$ appears at most $k_i \le 2T/n \le 2 p /(nk) \le 200 \sqrt{p} \log(p)/k$ times, hence group privacy implies the final output $s$ is $(\max_{i} k_i \diffp,T \delta)$-DP, that is, $(1,1/T^2)$-DP.

\section{Proofs for~\cref{sec:oco-imp}}

\subsection{Proof of~\cref{sec:thm-oco-imp}}
\label{sec:apdx-thm-oco-imp}

We assume without loss of generality that $L=1$ (otherwise divide the loss by $L$).
As $\mc{X}$ has diameter $D$, we can construct a cover $C = \{c_1,\dots,c_M\}$ of $\mc{X}$ such that $\min_{i \in [M]}\ltwo{x - c_i} \le \rho$ for all $x \in \mc{X}$ where $M \le 2^{d \log(4/\rho)}$~\citep[Lemma 7.6]{Duchi19}. Consider the following algorithm: run~\cref{alg:SD} where the experts are the elements of the cover $C$. \cref{cor:sd-appr} now implies that this algorithm has regret
\begin{equation*}
     \E\left[ \sum_{t=1}^T \ell_t(x_t) - \min_{x \in C} \sum_{t=1}^T \ell_t(x) \right]
     \le O \left( \sqrt{T \ln M } + \frac{T^{1/3} \log^{1/3}(1/\delta) \ln M}{\diffp}  \right).
\end{equation*}
Since $\ell_t$ is $1$-Lipschitz, we now get
\begin{align*}
     \E\left[ \sum_{t=1}^T \ell_t(x_t) - \min_{x \in \mc{X}} \sum_{t=1}^T \ell_t(x) \right]
     & \le \E\left[ \sum_{t=1}^T \ell_t(x_t) - \min_{x \in C} \sum_{t=1}^T \ell_t(x) + \min_{x \in C} \sum_{t=1}^T \ell_t(x)  - \min_{x \in \mc{X}} \sum_{t=1}^T \ell_t(x) \right] \\
     & = O \left( \sqrt{T \ln M } + \frac{T^{1/3} \log^{1/3}(1/\delta) \ln M}{\diffp}  + T\rho \right) \\
     & = O \left( \sqrt{T d \log(1/\rho) } + \frac{T^{1/3} \log^{1/3}(1/\delta) d \log(1/\rho)}{\diffp}  + T\rho \right) \\
     & = O \left( \sqrt{T d \log(T) } + \frac{T^{1/3} d \log^{1/3}(1/\delta)  \log(T)}{\diffp} \right),
\end{align*}
where the last inequality follows by setting $\rho = 1/T$.

\subsection{Proof of~\cref{cor:DP-OCO}}
\label{sec:apdx-cor-DP-OCO}
The algorithm $\Aopt$ is \ed-DP and has excess loss $\Delta_n = LD \cdot O(1/\sqrt{n} + \sqrt{d}/n\diffp)$. Thus, \cref{thm:ub-stoch-OCO} implies that 
\begin{align*}
     \E\left[ \sum_{t=1}^T \ell_t(x_t) - \min_{x \in [d]} \sum_{t=1}^T \ell_t(x) \right]
        & \le \sum_{i=1}^{\log T} 2^i \Delta_i \\
        & \le O(LD) \sum_{i=1}^{\log T} 2^{i/2} + \sqrt{d}/\diffp \\
        & \le LD \cdot O(\sqrt{T} + \sqrt{d} \log(T)/\diffp).
\end{align*}

\section{Concentration for sums of geometric variables}

In this section, we proof a concentration result for the sum of geometric random variables, which allows us to upper bound the number of switches in the sparse-vector based algorithm.
We say that $Z$ is geometric random variable with success probability $p$ if $P(W=k) = (1-p)^{k-1}p$ for $k\in\{1,2,\dots\}$. To this end, we use the following Chernoff bound.
\begin{lemma}[\citealp{MitzenmacherUp05}, Ch.~4.2.1]
  \label{lemma:chernoff}
  Let $X = \sum_{i=1}^n X_i$ for $X_i \simiid \mathsf{Ber}(p)$.
  Then for $\delta \in [0,1]$,
  \begin{align*}
    \P(X > (1+\delta)np ) \le e^{-np\delta^2 /3}
    ~~~ \mbox{and} ~~~
    \P(X < (1-\delta)np ) \le e^{-np\delta^2 /2}.
  \end{align*}
\end{lemma}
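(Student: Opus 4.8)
The plan is to prove both tail bounds by the standard Chernoff--Cram\'er exponential-moment method: compute the moment generating function of $X$ exactly, optimize the free parameter, and then bound the resulting rate function from above by the stated Gaussian-type expression for $\delta\in[0,1]$.

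For the upper tail, I would fix $t>0$, apply Markov's inequality to $e^{tX}$, and use independence to write $\E[e^{tX}]=\prod_{i=1}^n\E[e^{tX_i}]=(1-p+pe^t)^n\le\exp\!\big(np(e^t-1)\big)$, the last step being $1+x\le e^x$ with $x=p(e^t-1)$. This gives $\P(X>(1+\delta)np)\le\exp\!\big(np(e^t-1)-t(1+\delta)np\big)$, and the minimizing choice $t=\ln(1+\delta)$ yields $\P(X>(1+\delta)np)\le\big(e^{\delta}/(1+\delta)^{1+\delta}\big)^{np}$. It then remains to verify the elementary inequality $\delta-(1+\delta)\ln(1+\delta)\le-\delta^2/3$ on $[0,1]$; I would do this by letting $f(\delta)$ denote the difference of the two sides, noting $f(0)=f'(0)=0$, computing $f'(\delta)=2\delta/3-\ln(1+\delta)$ and $f''(\delta)=2/3-1/(1+\delta)$, observing that $f''$ changes sign only at $\delta=1/2$ and that $f'(1)=2/3-\ln 2<0$, so that $f'\le 0$ throughout $[0,1]$ and hence $f\le f(0)=0$.

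For the lower tail, symmetrically I would fix $s>0$ and bound $\P(X<(1-\delta)np)=\P(e^{-sX}>e^{-s(1-\delta)np})\le e^{s(1-\delta)np}(1-p+pe^{-s})^n\le\exp\!\big(np(e^{-s}-1)+s(1-\delta)np\big)$; the choice $s=\ln\frac{1}{1-\delta}$ gives $\P(X<(1-\delta)np)\le\big(e^{-\delta}/(1-\delta)^{1-\delta}\big)^{np}$. The required inequality here, $-\delta-(1-\delta)\ln(1-\delta)\le-\delta^2/2$, I would prove via the series $\ln(1-\delta)=-\sum_{k\ge1}\delta^k/k$: after expanding and cancelling the linear term, the left-hand side equals $-\sum_{k\ge2}\frac{\delta^k}{k(k-1)}$, whose leading term is exactly $-\delta^2/2$ and all of whose remaining terms are negative. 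Combining the two parts gives the lemma. The only mildly delicate point---the place I would flag as the ``obstacle''---is that the upper-tail bound carries the constant $\delta^2/3$ rather than $\delta^2/2$: the naive constant $1/2$ fails for $\delta$ near $1$, so the short calculus argument above (rather than a one-line series bound) is genuinely needed.
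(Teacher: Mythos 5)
Your proof is correct and is exactly the standard Chernoff--Cram\'er argument that the paper relies on: the paper does not prove this lemma itself but cites it from Mitzenmacher and Upfal, and your derivation (MGF bound, optimal choice of $t$, then the calculus bound $\delta-(1+\delta)\ln(1+\delta)\le-\delta^2/3$ and the series bound for the lower tail) is the same argument given in that reference. The only nit is the degenerate endpoint $\delta=1$ in the lower tail, where $s=\ln\frac{1}{1-\delta}$ is undefined, but there the claim is trivial since $\P(X<0)=0$.
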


The following lemma demonstrates that the sum of geometric random variables concentrates around its mean with high probability.
\begin{lemma}
\label{lemma:geom-concentration}
Let $W_1,\dots,W_n$ be iid geometric random variables with success probability $p$. Let $W = \sum_{i=1}^n W_i$. Then for any $k \ge n$ 
\begin{equation*}
    \P(W > 2k/p ) \le \exp{\left(-k/4\right)}.
\end{equation*}
\end{lemma}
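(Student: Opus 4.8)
The plan is to relate the sum of geometric random variables to a sum of Bernoulli trials and invoke the Chernoff bound in \cref{lemma:chernoff}. The key observation is the standard duality between geometric and binomial distributions: if we run independent $\mathsf{Ber}(p)$ trials, then $W = \sum_{i=1}^n W_i$ — where each $W_i$ counts the number of trials up to and including the $i$-th success — is exactly the number of trials needed to see $n$ successes. Hence the event $\{W > m\}$ (for an integer threshold $m$) is the same as the event that the first $m$ Bernoulli trials contain fewer than $n$ successes. I would set $m = \lceil 2k/p \rceil \ge 2k/p$ and let $X = \sum_{j=1}^{m} X_j$ with $X_j \simiid \mathsf{Ber}(p)$, so that $\P(W > 2k/p) \le \P(W > m - 1) \le \P(X < n)$ (being slightly careful with floors/ceilings and the integrality of the threshold; this only costs constants which are absorbed).

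Next I would apply the lower-tail Chernoff bound. We have $\E[X] = mp \ge 2k$. Since $k \ge n$, we get $n \le k = \tfrac12 \cdot 2k \le \tfrac12 \E[X]$, so writing $n = (1-\delta)\E[X]$ we have $\delta \ge 1/2$. The bound $\P(X < (1-\delta)\E[X]) \le e^{-\E[X]\delta^2/2}$ then gives
\begin{equation*}
    \P(X < n) \le \exp\!\left(-\tfrac{\E[X]}{2}\cdot \tfrac14\right) = \exp\!\left(-\tfrac{\E[X]}{8}\right) \le \exp\!\left(-\tfrac{2k}{8}\right) = \exp\!\left(-\tfrac{k}{4}\right),
\end{equation*}
which is exactly the claimed inequality. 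Chaining with the previous paragraph yields $\P(W > 2k/p) \le \exp(-k/4)$.

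There is no real obstacle here; the only thing to be careful about is the translation between the geometric sum and the Bernoulli sum when $2k/p$ is not an integer, and keeping track of whether we need $W > m$ or $W \ge m$ — i.e. getting the direction of the inequality $\{W > t\} \Leftrightarrow \{\text{fewer than } n \text{ successes in first } \lfloor t \rfloor \text{ trials}\}$ exactly right. These are bookkeeping matters that do not affect the constants in any essential way. Alternatively, one could avoid the duality entirely and bound $\P(W > 2k/p)$ directly via a moment generating function / Chernoff argument on the $W_i$'s using $\E[e^{sW_i}] = \tfrac{pe^s}{1-(1-p)e^s}$, optimizing over $s$; but the reduction to Bernoulli trials is cleaner and lets us cite \cref{lemma:chernoff} as a black box.
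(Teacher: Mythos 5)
Your proposal follows essentially the same route as the paper's proof: identify $W$ with the waiting time for $n$ successes in i.i.d.\ $\mathsf{Ber}(p)$ trials, so that $\{W>t\}$ is the event of fewer than $n$ successes among the first $t$ trials, then apply the lower-tail Chernoff bound of \cref{lemma:chernoff}; the only cosmetic difference is that the paper handles $k=n$ first and extends to $k\ge n$ by padding the sum with extra geometric variables, whereas you absorb the slack $k\ge n$ directly into $\delta\ge 1/2$. One bookkeeping slip: the reduction must round \emph{down}, since $\P(W>2k/p)=\P\bigl(\sum_{j=1}^{\lfloor 2k/p\rfloor}X_j<n\bigr)$, whereas your chain $\P(W>m-1)\le \P\bigl(\sum_{j=1}^{m}X_j<n\bigr)$ with $m=\lceil 2k/p\rceil$ has the second inequality reversed (adding a trial can only make ``fewer than $n$ successes'' less likely); using the floor, the mean is only $\ge 2k-p$ rather than $\ge 2k$, a constant-order slack that the paper's own proof also silently ignores by plugging the possibly non-integer $t=2n/p$ into the Chernoff bound.
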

\begin{proof}
    Notice that $W$ is distributed according to the negative binomial distribution where we can think of $W$ as the number of Bernoulli trials until we get $n$ successes. More precisely, let $\{B_i\}$ for $i\ge1$ be Bernoulli random variables with probability $p$. Then the event $W>t$ has the same probability as $\sum_{i=1}^t B_i < n$. Thus we have that 
    \begin{equation*}
        \P(W > t ) \le \P(\sum_{i=1}^t B_i < n).
    \end{equation*}
    We can now use Chernoff inequality (\Cref{lemma:chernoff}) to get that for $t = 2n/p$:
    \begin{align*}
    \P(\sum_{i=1}^t B_i < n) 
    \le \exp{(-tp/8)} = \exp{(-n/4)}.
    \end{align*}
    This proves that 
    \begin{equation*}
    \P(W > 2n/p ) \le \exp{\left(-n/4\right)}.
    \end{equation*}
    The claim now follows by noticing that $\sum_{i=1}^n W_i \le \sum_{i=1}^k W_i $ for $W_i$ iid geometric random variable when $k \ge n$, thus $\P(\sum_{i=1}^n W_i \ge 2k/p) \le \P(\sum_{i=1}^k W_i \ge 2k/p) \le \exp{\left(-k/4\right)}$

\end{proof}

\end{document}